\newtheorem{theorem}{Theorem}
\newtheorem{prop}{Proposition}
\def\x{{\mathbf x}}
\def\i{{\mathbf i}}
\def\j{{\mathbf j}}
\def\c{{\mathbf c}}
\def\h{{\mathbf h}}
\def\w{{\mathbf w}}
\def\R{{\mathbb R}}
\def\S{{\mathcal S}}
\def\I{{\mathcal I}}
\def\K{{\mathcal K}}
\def\A{{\mathcal A}}
\def\H{{\mathcal H}}
\def\Xcal{{\mathcal X}}
\def\Ycal{{\mathcal Y}}
\def\Hcal{{\mathcal H}}
\def\Real{{\mathbb R}}
\def\Ecal{{\mathcal E}}
\def\z{{\mathbf z}}
\def\gap{{\text{gaps}}}
\def\defin{:=}
\def\b{{\mathbf b}}
\def\eg{{\it{e.g.}}}
\def\u{{\mathbf u}}
\def\gmp{{\text{gmp}}}
\DeclareMathOperator*{\argmin}{arg\,min}
\title{Recurrent Kernel Networks}
\author{%
  Dexiong Chen \\
  Inria\thanks{Univ. Grenoble Alpes, Inria, CNRS, Grenoble INP, LJK, 38000 Grenoble, France.} \\
  \texttt{dexiong.chen@inria.fr} \\
  \And
  Laurent Jacob \\
  CNRS\thanks{Univ. Lyon, Universit\'e Lyon 1, CNRS, Laboratoire de Biom\'etrie et Biologie Evolutive UMR 5558, 69000 Lyon, France} \\
  \texttt{laurent.jacob@univ-lyon1.fr} \\
  \And
  Julien Mairal \\
  Inria\footnotemark[1] \\
  \texttt{julien.mairal@inria.fr} \\
}
\begin{document}

\maketitle

\begin{abstract}
Substring kernels are classical tools for representing biological sequences or
text. However, when large amounts of annotated data are available, models
that allow end-to-end training such as neural networks are often preferred.
Links between recurrent neural networks (RNNs) and substring kernels
have recently been drawn, by formally
showing that RNNs with specific activation functions 
were points in a reproducing kernel Hilbert space (RKHS).
In this paper, we revisit this link by generalizing convolutional
kernel networks---originally related to a relaxation of the mismatch
kernel---to model gaps in sequences. 
It results in a new type of
recurrent neural network 
which can be trained end-to-end with backpropagation, or without supervision
by using kernel approximation techniques.
We experimentally show that our approach is well suited to biological
sequences, where it outperforms existing methods for protein classification
tasks.

\end{abstract}

\section{Introduction}
Learning from biological sequences is important for a variety of scientific fields such as evolution~\cite{flagel} or human
health~\cite{Topol2019High}. In order to use classical statistical models, 
a first step is often to map sequences to vectors of fixed size,
while retaining relevant features for the considered learning task. For a long
time, such features have been extracted from sequence alignment, either against a reference
or between each others~\cite{Auton2015AGR}.
The
resulting features are appropriate for sequences that are similar
enough, but they become ill-defined when sequences are not suited to
alignment. This includes important cases such as microbial genomes,
distant species, or human diseases, and calls for alternative representations~\citep{Computational2016Computational}.

String kernels provide generic representations for biological
sequences, most of which do not require global alignment
~\citep{scholkopf2004kernel}. In particular, a classical approach
maps sequences to a huge-dimensional feature space by enumerating
statistics about all occuring subsequences. 
These subsequences may be simple classical $k$-mers leading to the
spectrum kernel~\cite{leslie2001spectrum}, $k$-mers up to
mismatches~\cite{leslie2004mismatch}, or gap-allowing
subsequences~\cite{lodhi2002text}.  Other approaches involve kernels
based on a generative
model~\cite{jaakkola1999using,tsuda2002marginalized}, or based on
local alignments between sequences~\cite{vert2004convolution} inspired
by convolution kernels~\cite{haussler1999convolution,watkins1999dynamic}.

The goal of kernel design is then to encode prior knowledge in the learning process. For instance, modeling gaps in biological
sequences is important since it allows taking into account short
insertion and deletion events, a common source of genetic
variation.
However, even though kernel methods are good at encoding prior knowledge,
they provide fixed task-independent representations.
When large
amounts of data are available, approaches that optimize the data
representation for the prediction task are now often preferred. 
For instance, convolutional
neural networks~\cite{lecun1989backpropagation} are commonly used for
DNA sequence
modeling~\cite{alipanahi2015predicting,angermueller2016deep,Zhou2015Predicting},
and have been successful for natural language
processing~\cite{kalchbrenner2014convolutional}. While convolution
filters learned over images are interpreted as image patches, those
learned over sequences are viewed as sequence
motifs. RNNs such as long short-term memory
networks (LSTMs)~\cite{hochreiter1997long} are also commonly used in
both biological~\cite{hochreiter2007fast} and natural language
processing contexts~\citep{cho2014learning,merity2017regularizing}. 

Motivated by the regularization mechanisms of kernel methods, which
are useful when the amount of data is small and are yet imperfect in
neural networks, hybrid approaches have been developed between the
kernel and neural networks
paradigms~\cite{cho2009kernel,morrow2017convolutional,zhang2016convexified}.
Closely related to our work, the convolutional kernel network (CKN)
model originally developed for images~\cite{mairal2016end} was
successfully adapted to biological sequences
in~\cite{chen2019bio}. CKNs for sequences consist in a continuous
relaxation of the mismatch kernel: while the latter represents a
sequence by its content in $k$-mers up to a few discrete errors, the
former considers a continuous relaxation, leading
to an infinite-dimensional sequence representation. 
Finally, a kernel approximation relying on the Nystr\"om
method~\cite{williams2001using} projects the mapped sequences to a
linear subspace of the RKHS, spanned by a finite
number of motifs. When these motifs are learned end-to-end with
backpropagation, learning with CKNs can also be thought of as
performing feature selection in the---infinite dimensional---RKHS.

In this paper, we generalize CKNs for sequences by allowing gaps in
motifs, motivated by genomics applications. The kernel map retains the
convolutional structure of CKNs but the kernel approximation that we
introduce can be computed using a recurrent network, which we call
recurrent kernel network (RKN). This RNN arises from the dynamic
programming structure used to compute efficiently the substring kernel
of~\cite{lodhi2002text}, a link already exploited
by~\cite{lei2017deriving} to derive their sequence neural
network, which was a source of inspiration for our work. Both our kernels rely on a RNN to build a representation of
an input sequence by computing a string kernel between this sequence
and a set of learnable filters. Yet, our model exhibits several
differences with~\cite{lei2017deriving}, who use the regular substring
kernel of~\cite{lodhi2002text} and compose this representation with
another non-linear map---by applying an activation function to the
output of the RNN. By contrast, we obtain a different RKHS directly by relaxing
the substring kernel to allow for inexact matching at the compared positions,
and embed the Nystr\"om approximation within the RNN.
The resulting feature space can be interpreted as a
continuous neighborhood around all substrings (with gaps) of the described
sequence. Furthermore, our RNN provides a finite-dimensional approximation of
the relaxed kernel, relying on the Nystr\"om approximation
method~\cite{williams2001using}. As a consequence, RKNs may be learned
in an unsupervised manner (in such a case, the goal is to approximate
the kernel map), and with supervision with backpropagation, which may be interpreted as performing
feature selection in the RKHS.

\paragraph{Contributions.}
In this paper, we make the following contributions:\\
~$\bullet$~We generalize convolutional kernel networks for
sequences~\cite{chen2019bio} to allow gaps, an important option for biological data. 
As in~\cite{chen2019bio}, we observe that the kernel formulation brings practical benefits over traditional CNNs or RNNs~\cite{hochreiter2007fast} when the amount of labeled data is small or moderate.\\
~$\bullet$~We provide a kernel point of view on recurrent neural
networks with new unsupervised and supervised learning algorithms. The resulting
feature map can be interpreted in terms of gappy motifs, and end-to-end
learning amounts to performing feature selection.\\
~$\bullet$~Based on~\cite{murray2014generalized}, we propose a new way to simulate max pooling in RKHSs, thus solving a
classical discrepancy between theory and practice in the literature of string
kernels, where sums are often replaced by a maximum operator that does not
ensure positive definiteness~\cite{vert2004convolution}.

\section{Background on Kernel Methods and String Kernels}\label{sec:background}
Kernel methods consist in mapping data points living in a set $\Xcal$ to a
possibly infinite-dimensional Hilbert space~$\Hcal$, through a mapping function $\Phi: \Xcal \to \Hcal$, before learning
a simple predictive model in~$\Hcal$~\citep{scholkopf2002learning}.
The so-called kernel trick allows to perform learning without explicitly computing
this mapping, as long as the inner-product~$K(\x,\x')=\langle \Phi(\x), \Phi(\x')\rangle_\Hcal$ between two points~$\x, \x'$ can be 
efficiently computed. Whereas kernel methods traditionally lack scalability since they require computing an
$n\times n$ Gram matrix, where $n$ is the amount of training data,
recent approaches based on approximations have managed to make kernel methods work at large scale in many cases~\cite{rahimi2008random,williams2001using}.

For sequences in $\Xcal=\A^*$, which is the
set of sequences of any possible length over an alphabet~$\A$, the mapping $\Phi$
often enumerates subsequence content. For instance, the spectrum kernel 
maps sequences to a fixed-length vector 
$\Phi(\x) = \left(\phi_u(\x)\right)_{u\in\A^k}$, where
$\A^k$ is the set of $k$-mers---length-$k$ sequence of characters in
$\A$ for some $k$ in $\mathbb{N}$, and $\phi_u(\x)$ counts the number of occurrences of $u$ in~$\x$~\citep{leslie2001spectrum}.
The mismatch kernel~\citep{leslie2004mismatch} operates similarly, but $\phi_u(\x)$ counts the occurrences of $u$ up to a few mismatched letters, which 
is useful when $k$ is large and exact occurrences are rare.

\subsection{Substring kernels}
As~\cite{lei2017deriving}, we consider the substring kernel introduced
in~\cite{lodhi2002text}, which allows to model the presence of gaps
when trying to match a substring $u$ to a sequence $\x$. 
Modeling gaps requires introducing the following notation:
$\I_{\x,k}$ denotes the set of indices of sequence $\x$ with
$k$ elements $(i_1,\dots,i_k)$ satisfying
$1\le i_1<\cdots< i_k\le |\x|$, where $|\x|$ is the length of $\x$.
For an index set $\i$ in $\I_{\x,k}$, we may now consider the subsequence
$\x_{\i} = (\x_{i_1},\dots,\x_{i_k})$ of $\x$ indexed by $\i$.
Then, the substring kernel takes the same form as the mismatch and spectrum kernels, but 
$\phi_u(\x)$ counts all---consecutive or not---subsequences of $\x$ equal to
$u$, and weights them by the number of gaps.
Formally, we consider a parameter $\lambda$ in $[0,1]$, and 
$\phi_u(\x) = \sum_{\i\in\I_{\x,k}} \lambda^{\gap(\i)}\delta(u, \x_{\i})$, where $\delta(u, v)=1$ if and only if $u=v$, and
$0$ otherwise, and $\gap(\i) \defin i_k - i_1 - k +
  1$ is the number
of gaps in the index set $\i$.
When $\lambda$ is small, gaps are heavily penalized, whereas a value
close to~$1$ gives similar
weights to all occurrences. Ultimately, the resulting kernel between two sequences $\x$ and $\x'$ is
\begin{equation}\label{eq:substring_kernel}
  \K^s(\x,\x'):=\sum_{\i\in\I_{\x,k}} \sum_{\j\in\I_{\x',k}}
  \lambda^{\gap(\i)}\lambda^{\gap(\j)} \delta\left(\x_{\i},
    \x'_{\j}\right).
\end{equation}
As we will see in Section~\ref{sec:rkn}, our RKN model relies
on~(\ref{eq:substring_kernel}), but unlike~\cite{lei2017deriving}, we replace
the quantity $\delta(\x_{\i}, \x'_{\j})$ that matches exact
occurrences by a relaxation, allowing more subtle comparisons. Then, we will show that the 
model can be interpreted as a gap-allowed extension of CKNs for
sequences.
We also note that even though~$\K^s$ seems computationally expensive at first
sight, it was shown in~\cite{lodhi2002text} that~(\ref{eq:substring_kernel}) admits a dynamic
programming structure leading to efficient computations.

\subsection{The Nystr\"om method}
\label{sec:nystrom}
When computing the  Gram
matrix is infeasible, it is typical to use kernel approximations 
~\cite{rahimi2008random,williams2001using}, consisting 
 in finding a $q$-dimensional mapping $\psi: \Xcal \to
\Real^q$ such that the kernel $K(\x,\x')$ can be approximated by a Euclidean
inner-product $\langle \psi(\x), \psi(\x')\rangle_{\R^q}$.
Then, kernel methods can be simulated by a linear model operating on $\psi(\x)$,
which does not raise scalability issues if $q$ is reasonably small.
Among kernel approximations, the Nystr\"om method consists in
projecting points of the RKHS onto a $q$-dimensional subspace, allowing to
represent points into a $q$-dimensional coordinate system. 

Specifically, consider a collection of $Z=\{\z_1,\ldots,\z_q\}$ points in $\Xcal$ and
consider the subspace 
$$\Ecal = \text{Span}(\Phi(\z_1), \ldots, \Phi(\z_q))~~~\text{and define}~~~\psi(\x) = K_{ZZ}^{-\frac{1}{2}}K_Z(\x),$$
where $K_{ZZ}$ is the $q \times q$ Gram matrix of $K$ restricted to the 
samples $\z_1, \ldots, \z_q$ and $K_Z(\x)$ in~$\R^q$
carries the kernel values $K(\x, \z_j), j=1, \ldots, q$. 
This approximation
only requires $q$ kernel evaluations and often retains good
performance for learning. Interestingly as noted
in~\cite{mairal2016end}, $\langle \psi(\x), \psi(\x')\rangle_{\R^q}$ is
exactly the inner-product in $\H$ between the projections of $\Phi(\x)$
and $\Phi(\x')$ onto~$\Ecal$,
which remain in~$\Hcal$. 

When $\Xcal$ is a Euclidean space---this can be the case for sequences when
using a one-hot encoding representation, as discussed later--- 
a good set of anchor points $\z_j$ can be obtained by simply clustering the
data and choosing the centroids as anchor points~\cite{zhang2008improved}.
The goal is then to obtain a subspace $\Ecal$ that spans data as best as possible.
Otherwise, 
previous works on kernel networks~\cite{chen2019bio,mairal2016end} have also
developed procedures to learn the set of anchor points end-to-end by
optimizing over the learning objective. This approach can then be seen as performing 
feature selection in the RKHS.

\section{Recurrent Kernel Networks}\label{sec:rkn}
With the previous tools in hand, we now introduce RKNs. We show that
it admits variants of CKNs, substring and local
alignment kernels as special cases, and we discuss its relation
with RNNs.

\subsection{A continuous relaxation of the substring kernel allowing mismatches}
From now on, and with an abuse of notation, we represent characters in
$\A$ as vectors in $\R^d$. For instance, when using one-hot encoding,
a DNA sequence $\x=(\x_1,\ldots,\x_m)$ of length $m$ can be seen as a
4-dimensional sequence where each $\x_j$ in $\{0, 1\}^4$ has a unique
non-zero entry indicating which of $\{A,C,G,T\}$ is present at the
$j$-th position, and we denote by $\Xcal$ the set of such sequences.
We now define the single-layer RKN as a generalized substring kernel~\eqref{eq:substring_kernel}
in which the indicator function $\delta(\x_\i,\x'_\j)$ is replaced
by a kernel for $k$-mers:
\begin{equation}
	\K_k(\x,\x'):=\sum_{\i\in\I_{\x,k}} \sum_{\j\in\I_{\x',k}}
        \lambda_{\x,\i}\lambda_{\x,\j} e^{-\frac{\alpha}{2}\|\x_\i - \x'_\j\|^2},\label{eq:kRKN}
\end{equation}
where we assume that the vectors representing characters have unit
$\ell_2$-norm, such that $e^{-\frac{\alpha}{2}\|\x_\i -
  \x'_\j\|^2}=e^{\alpha( \langle \x_\i, \x'_\j \rangle -
  k)}=\prod_{t=1}^k e^{\alpha\left(\langle \x_{i_t},\x'_{j_t}\rangle -1\right)}$
is a dot-product kernel, and $\lambda_{\x,\i}=\lambda^{\gap(\i)}$ if
we follow~(\ref{eq:substring_kernel}). 

For $\lambda=0$ and using the
convention $0^0=1$, all the terms in these sums are zero except those
for $k$-mers with no gap, and we recover the kernel of the CKN model
of~\cite{chen2019bio} with a convolutional structure---up to the
normalization, which is done $k$-mer-wise in CKN instead of
position-wise.

Compared to~\eqref{eq:substring_kernel}, the relaxed
version~\eqref{eq:kRKN} accommodates inexact $k$-mer matching. This is
important for protein sequences, where it is common to consider different similarities between
amino acids in terms of substitution
frequency along evolution~\citep{henikoff1992amino}. This is also
reflected in the underlying sequence representation in the RKHS illustrated in Figure~\ref{fig:rkhs}: by considering $\varphi(.)$ the
kernel mapping and RKHS~$\Hcal$ such that
$K(\x_\i, \x'_\j) = e^{-\frac{\alpha}{2}\|\x_\i - \x'_\j\|^2} = \langle \varphi(\x_\i),
\varphi(\x'_\j) \rangle_{\Hcal}$, we have
\begin{equation}
	\K_k(\x,\x')= \left\langle \sum_{\i\in\I_{\x,k}} \lambda_{\x,\i} \varphi(\x_\i)  , \sum_{\j\in\I_{\x',k}} \lambda_{\x,\j} \varphi(\x'_\j)\right\rangle_{\Hcal}. \label{eq:kRKNconv}
\end{equation}
A natural feature map for a sequence $\x$ is therefore
$\Phi_k(\x) = \sum_{\i\in\I_{\x,k}} \lambda_{\x,\i} \varphi(\x_\i)$: using the
RKN amounts to representing~$\x$ by a mixture of continuous
neighborhoods $\varphi(\x_\i): \z \mapsto e^{-\frac{\alpha}{2}\|\x_\i - \z\|^2}$ centered on all
its $k$-subsequences $\x_\i$ , each weighted by the corresponding
$\lambda_{\x,\i}$ (\eg, $\lambda_{\x,\i} = \lambda^{\gap(\i)}$). As a particular 
case, a feature map of CKN~\cite{chen2019bio} is the sum of the kernel mapping 
of all the $k$-mers without gap.

\begin{figure}[hbtp]
\centering
   \includegraphics[width=0.9\linewidth]{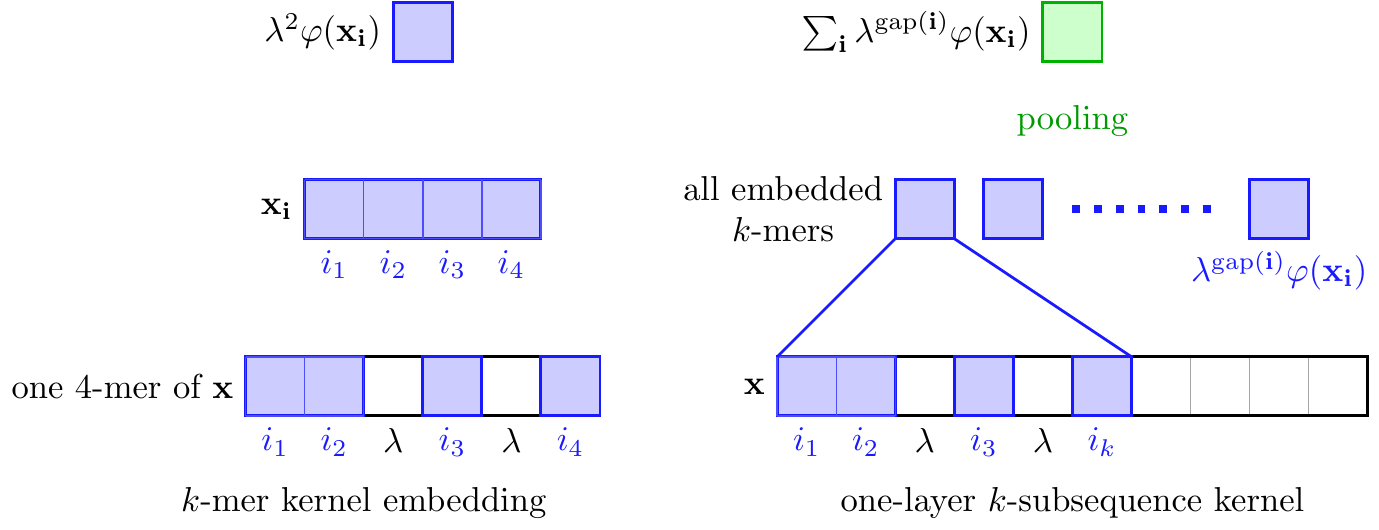}
   \caption{Representation of sequences in a RKHS based on~$\K_k$ with $k=4$ and $\lambda_{\x,\i}=\lambda^{\gap(\i)}$.}\label{fig:rkhs}
\end{figure}

\subsection{Extension to all $k$-mers and relation to the local alignment kernel}
Dependency in the hyperparameter $k$ can be removed by summing $\K_k$
over all possible values:
\begin{equation*}
	\K_{\textrm{sum}}(\x,\x'):= \sum_{k=1}^\infty\K_k(\x,\x') =
        \sum_{k=1}^{\max(|\x|, |\x'|)}\K_k(\x,\x').
\end{equation*}
Interestingly, we note that $\K_\text{sum}$
admits the local alignment kernel
of~\cite{vert2004convolution} as a special case.
More precisely, local alignments are defined via the
tensor product set $\A_k(\x,\x'):=\I_{\x,k}\times\I_{\x',k}$, which
contains all possible alignments of $k$ positions between a pair of
sequences $(\x, \x')$. The local alignment score of each such
alignment $\pi=(\i,\j)$ in $\A_k(\x,\x')$ is defined, by
\cite{vert2004convolution}, as $S(\x,\x', \pi):=\sum_{t=1}^{k}
s(\x_{i_t}, \x'_{j_t})-\sum_{t=1}^{k-1}[g(i_{t+1} - i_t - 1)+g(j_{t+1}
  - j_t - 1)]$, where $s$ is a symmetric substitution function and $g$
is a gap penalty function.  The local alignment kernel in
\cite{vert2004convolution} can then be expressed in terms of the above
local alignment scores (Thrm. 1.7 in \cite{vert2004convolution}):
\begin{equation}
  \label{eq:LA}
	K_{LA}(\x,\x')=\sum_{k=1}^{\infty}K_{LA}^{k}(\x,\x'):=\sum_{k=1}^{\infty}\sum_{\pi\in\A_k(\x,\x')}\exp(\beta \S(\x,\x',\pi))~~~\text{for some}~\beta > 0.
\end{equation}
When the gap penalty function is linear---that is, $g(x)=cx$ with
$c>0$, $K_{LA}^{k}$ becomes
$K_{LA}^{k}(\x,\x')=\sum_{\pi\in\A_k(\x,\x')}\exp(\beta
\S(\x,\x',\pi)) =\sum_{(\i,\j)\in\A_k(\x,\x')} e^{-c\beta \gap(\i)}
e^{-c\beta \gap(\j)} \prod_{t=1}^k e^{\beta s
  (\x_{i_t},\x'_{j_t})}$. 
When $s(\x_{i_t},\x'_{j_t})$ can be written as an inner-product $\langle \psi_s(\x_{i_t}),\psi_s(\x'_{j_t})\rangle$ between normalized vectors,
we see that $K_{LA}$ becomes a special case of~\eqref{eq:kRKN}---up to a constant
factor---with $\lambda_{\x,\i}=e^{-c\beta\gap(\i)}$,
$\alpha={\beta }$. 

This observation sheds new lights on the relation between the substring and
local alignment kernels, which will inspire new algorithms in the sequel.  To the best of our knowledge, the link we will
provide between RNNs and local alignment kernels is also new.

\subsection{Nystr\"om approximation and recurrent neural networks}\label{subsec:nystrom}
As in CKNs, we now use the Nystr\"om approximation method as a
building block to make the above kernels tractable. According
to~(\ref{eq:kRKNconv}), we may first use the Nystr\"om method
described in Section~\ref{sec:nystrom} to find an approximate
embedding for the quantities $\varphi(\x_\i)$, where $\x_\i$ is one of
the $k$-mers represented as a matrix in $\Real^{k \times d}$. This is
achieved by choosing a set $Z=\{\z_1,\ldots,\z_q\}$ of anchor points
in $\Real^{k \times d}$, and by encoding $\varphi(\x_\i)$ as
$K_{ZZ}^{-{1}/{2}}K_Z(\x_\i)$---where $K$ is the kernel of
$\Hcal$. Such an approximation for $k$-mers yields the $q$-dimensional
embedding for the sequence $\x$:
\begin{equation}
    \psi_k(\x) = \sum_{\i\in\I_{\x,k}} \lambda_{\x,\i} K_{ZZ}^{-\frac{1}{2}} K_Z(\x_\i) =  K_{ZZ}^{-\frac{1}{2}} \sum_{\i\in\I_{\x,k}} \lambda_{\x,\i}K_Z(\x_\i).\label{eq:psik}
\end{equation}
Then, an approximate feature map~$\psi_{\text{sum}}(\x)$ for the kernel $\K_\text{sum}$ can be obtained by concatenating the embeddings $\psi_1(\x),\ldots,\psi_k(\x)$ for $k$ large enough.

\paragraph{The anchor points as motifs.}
The continuous relaxation of the substring kernel presented in (\ref{eq:kRKN}) allows us to 
learn anchor points that can be interpreted as sequence motifs, 
where each position can encode a mixture of letters. This can lead to more
relevant representations than $k$-mers for learning on biological sequences.
For example, the fact that a DNA sequence is bound by a particular
transcription factor can be associated with the presence of a T followed by
either a G or an A, followed by another T, would require two $k$-mers but
a single motif \cite{chen2019bio}. Our kernel is able to perform
such a comparison.

\paragraph{Efficient computations of $\K_k$ and $\K_\text{sum}$ approximation via RNNs.}
A naive computation of $\psi_k(\x)$ would require enumerating all
substrings present in the sequence, which may be exponentially large
when allowing gaps. For this reason, we use the classical dynamic
programming approach of substring kernels~\cite{lei2017deriving,lodhi2002text}.
Consider then the computation of $\psi_j(\x)$ defined in~(\ref{eq:psik}) 
for $j=1,\ldots,k$ as well as a set of anchor points $Z_k=\{\z_1,\ldots,\z_q\}$ with the $\z_i$'s in $\Real^{d \times k}$.
We also denote by~$Z_j$ the set obtained when keeping only $j$-th first positions (columns) of the $\z_j$'s, leading to 
$Z_j=\{[\z_1]_{1:j},\ldots,[\z_q]_{1:j}\}$, which will serve as anchor points for the kernel $\K_j$ to compute $\psi_j(\x)$.
Finally, we denote by $\z_i^j$ in~$\Real^d$ the $j$-th column of $\z_i$ such that
$\z_i=[\z_i^1,\ldots,\z_i^k]$.
Then, the embeddings~$\psi_1(\x),\ldots,\psi_k(\x)$ can be computed recursively by using the following theorem:
\begin{theorem}
  \label{thm:kzx}
  For any $j\in\{1,\dots,k\}$ and
  $t\in\{1,\dots,|\x|\}$,
  \begin{equation}
   \psi_j(\x_{1:t}) =K_{Z_j Z_j}^{-\frac{1}{2}}
\begin{cases}
     \c_j[t] & \text{if } \lambda_{\x,\i}=\lambda^{|\x|-i_1-j+1}, \\
     \h_j[t] & \text{if } \lambda_{\x,\i}=\lambda^{\gap(\i)},
    \end{cases}
  \end{equation}
 where $c_j[t]$ and $h_j[t]$ form a sequence of vectors in $\Real^q$ indexed by $t$ such that $c_j[0]=h_j[0]=0$, and $c_0[t]$ is a vector that contains only ones, while the sequence obeys the recursion
  \begin{equation}\label{eqn:rkn}
    \begin{aligned}
      \c_j[t]&=\lambda\c_j[t-1]+\c_{j-1}[t-1] \odot \b_j[t] & 1\le j\le k,\\
      \h_j[t]&=\h_j[t-1]+\c_{j-1}[t-1]\odot \b_j[t]  & 1\le j\le k,
    \end{aligned}
  \end{equation}
  where $\odot$ is the elementwise multiplication operator and $\b_j[t]$ is a vector in $\Real^q$ whose entry~$i$ in $\{1,\ldots,q\}$ is $e^{-\frac{\alpha}{2}\|\x_t-\z_j^i\|^2} = e^{\alpha (\langle \x_t, \z_j^i\rangle -1)}$ and $\x_t$ is the $t$-th character of $\x$.
\end{theorem}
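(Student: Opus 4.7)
The plan is to interpret $\c_j[t]$ and $\h_j[t]$ as the un-whitened versions of $\psi_j(\x_{1:t})$. Define $A_j[t] := \sum_{\i\in\I_{\x_{1:t},j}} \lambda^{t-i_1-j+1} K_{Z_j}(\x_{\i})$ and $B_j[t] := \sum_{\i\in\I_{\x_{1:t},j}} \lambda^{\gap(\i)} K_{Z_j}(\x_{\i})$. Because $K_{Z_j Z_j}^{-1/2}$ is a constant matrix that factors out of the sum defining $\psi_j$ in~\eqref{eq:psik}, the theorem reduces to proving $\c_j[t]=A_j[t]$ and $\h_j[t]=B_j[t]$, which I would establish jointly by induction on $t$ and $j$. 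The structural fact driving the whole argument is that the Gaussian kernel factorizes over positions, so whenever $\i=(\i',i_j)$ with $\i'\in\I_{\x,j-1}$, one has $K_{Z_j}(\x_{\i})=K_{Z_{j-1}}(\x_{\i'})\odot \b_j[i_j]$; this is how $\b_j[t]$ enters each recursion.

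\textbf{Recursion for $\c_j$.} I would split $\I_{\x_{1:t},j}$ into $\{\i:i_j<t\}$ and $\{\i:i_j=t\}$. The indices in the first set lie in $\I_{\x_{1:t-1},j}$, their kernel values are unchanged, and the exponent $t-i_1-j+1$ gains exactly one unit relative to $(t-1)-i_1-j+1$, producing a global factor of $\lambda$; this contributes $\lambda A_j[t-1]$. For indices with $i_j=t$, write $\i=(\i',t)$ with $\i'\in\I_{\x_{1:t-1},j-1}$; the exponent rewrites as $t-i_1-j+1=(t-1)-i_1-(j-1)+1$, which is exactly the trailing-gap weight used by $A_{j-1}[t-1]$, while the kernel factorization supplies the $\odot\b_j[t]$. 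Summing the two contributions yields $A_j[t]=\lambda A_j[t-1]+A_{j-1}[t-1]\odot\b_j[t]$, matching the claimed $\c_j$ recursion.

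\textbf{Recursion for $\h_j$ and main subtlety.} The $B_j$ recursion is the genuinely delicate step and will be the main obstacle to present cleanly. The same case split applies, but the gap weight behaves differently. When $i_j<t$, the quantity $\gap(\i)=i_j-i_1-j+1$ does not depend on $t$ at all, so no factor of $\lambda$ appears and this contribution is simply $B_j[t-1]$; this explains the coefficient $1$ in front of $\h_j[t-1]$. When $i_j=t$, the naive attempt to invoke $B_{j-1}[t-1]$ fails, since $\gap((\i',t))-\gap(\i')=t-i_{j-1}-1$ depends on $\i'$ through its last coordinate and cannot be pulled out of the sum. The resolution, which is the heart of the proof, is that $\gap((\i',t))=t-i_1-j+1=(t-1)-i_1-(j-1)+1$ is precisely the trailing-gap weight defining $A_{j-1}[t-1]$; hence the $i_j=t$ contribution equals $A_{j-1}[t-1]\odot\b_j[t]$, which is why the recursion couples $\h_j$ to $\c_{j-1}$ rather than to $\h_{j-1}$. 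The base cases are immediate: $\I_{\x,0}$ contains only the empty index with gap $0$ and empty-product kernel value equal to the all-ones vector, giving $\c_0[t]=\mathbf{1}$, while $\I_{\emptyset,j}=\emptyset$ for $j\ge 1$ forces $\c_j[0]=\h_j[0]=0$. Premultiplying by $K_{Z_j Z_j}^{-1/2}$ then closes the argument.
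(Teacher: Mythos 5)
Your proposal is correct and follows essentially the same route as the paper's proof: factor out $K_{Z_jZ_j}^{-1/2}$, use the position-wise factorization $K_{Z_j}(\x_{\i})=K_{Z_{j-1}}(\x_{\i'})\odot\b_j[i_j]$, split the sum according to whether $i_j<t$ or $i_j=t$, and observe that $\gap((\i',t))=(t-1)-i_1-(j-1)+1$ is exactly the trailing-gap weight of $\c_{j-1}[t-1]$, which is why $\h_j$ couples to $\c_{j-1}$. Your write-up is in fact slightly more careful than the paper's, which contains a small typo ($\tilde\c_{j-1}[t]$ where $\tilde\c_{j-1}[t-1]$ is meant) at precisely this step.
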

A proof is provided in Appendix~\ref{appendix:nystrom} and is based on classical recursions
for computing the substring kernel, which were interpreted as RNNs
by~\cite{lei2017deriving}.  The main difference in the RNN structure
we obtain is that their non-linearity is applied over the outcome of
the network, leading to a feature map formed by composing the feature
map of the substring kernel of~\cite{lodhi2002text} and another one
from a RKHS that contains their non-linearity. By contrast, our
non-linearities are built explicitly in the substring kernel, by
relaxing the indicator function used to compare characters. The
resulting feature map is a continuous neighborhood around all
substrings of the described sequence.
In addition, the Nystr\"om method yields an orthogonalization
factor $K_{ZZ}^{-{1}/{2}}$ to the output $K_Z(\x)$ of the network to
compute our approximation, which is perhaps the only non-standard component of our RNN.
This factor provides
an interpretation of $\psi(\x)$ as a kernel approximation. As discussed next, 
it makes it possible to learn the anchor points by $k$-means, see~\cite{chen2019bio}, 
which also makes the initialization of the supervised learning procedure
simple  without having to 
deal with the scaling of the initial motifs/filters $\z_j$.

\paragraph{Learning the anchor points~$Z$.}
We now turn to the application of RKNs to supervised
learning. Given $n$ sequences $\x^1,\dots,\x^n$ in $\Xcal$ and their
associated labels $y^1,\dots,y^n$ in $\Ycal$, \emph{e.g.}, $\Ycal=\{-1,1\}$
for binary classification or $\Ycal=\R$ for regression, our objective
is to learn a function in the RKHS $\Hcal$ of $\K_k$ by minimizing
\begin{equation*}
	\min_{f\in\Hcal} \frac{1}{n}\sum_{i=1}^n L(f(\x^i),y^i)+\frac{\mu}{2} \|f\|^2_{\Hcal},
\end{equation*}
where $L:\R\times\R\to\R$ is a convex loss function that measures the
fitness of a prediction $f(\x^i)$ to the true label $y^i$ and
$\mu$ controls the smoothness of the predictive function. After
injecting our kernel approximation
$\K_k(\x,\x')\simeq\langle \psi_k(\x), \psi_k(\x') \rangle_{\R^q}$,
the problem becomes
\begin{equation}\label{eq:sup}
	\min_{\w\in\R^q} \frac{1}{n}\sum_{i=1}^n L\left(\langle \psi_k(\x^i),\w\rangle ,y^i\right) + \frac{\mu}{2}\|\w\|^2.
\end{equation}

Following~\cite{chen2019bio,mairal2016end}, we can learn the
anchor points $Z$ without exploiting training labels, by applying a
$k$-means algorithm to all (or a subset of) the $k$-mers extracted
from the database and using the obtained centroids as anchor points.
Importantly, once $Z$ has been obtained, the linear function
parametrized by $\w$ is still optimized with respect to the supervised
objective~\eqref{eq:sup}. This procedure can be thought of as learning
a general representation of the sequences disregarding the supervised
task, which can lead to a relevant description while limiting
overfitting. 

Another strategy consists in optimizing~(\ref{eq:sup}) jointly over
$(Z,\w)$, after observing that
$\psi_k(\x) = K_{ZZ}^{-{1}/{2}} \sum_{\i\in\I_{\x,k}}
\lambda_{\x,\i}K_Z(\x_\i)$ is a smooth function of $Z$. 
Learning can be achieved by using backpropagation over $(Z,\w)$, or by
using an alternating minimization strategy between $Z$ and $\w$. It
leads to an end-to-end scheme where both the representation and the
function defined over this representation are learned with respect to
the supervised objective~\eqref{eq:sup}.  
Backpropagation rules for most operations are classical, except for the matrix inverse square root
function, which is detailed in Appendix~\ref{appendix:backprop}.  Initialization is also
parameter-free since the unsupervised learning approach may be used
for that.

\subsection{Extensions}\label{subsec:ext}

\paragraph{Multilayer construction.}

In order to account for
long-range dependencies, it is possible to construct a multilayer model
based on kernel compositions similar to~\cite{lei2017deriving}. Assume that $\K^{(n)}_k$ is the $n$-th
layer kernel and $\Phi_k^{(n)}$ its mapping function. The
corresponding $(n+1)$-th layer kernel is defined as
\begin{equation}
  \K_k^{(n+1)} (\x,\x')=\sum_{\i\in\I_{\x,k},\j\in\I_{\x',k}}\lambda^{(n+1)}_{\x,\i}\lambda^{(n+1)}_{\x',\j} \prod_{t=1}^k K_{n+1}(\Phi_k^{(n)}(\x_{1:i_t}),\Phi_k^{(n)}(\x'_{1:j_t})), \label{eq:multilayer}
\end{equation}
where $K_{n+1}$ will be defined in the sequel and the choice of weights $\lambda^{(n)}_{\x,\i}$ slightly differs from the single-layer model. 
We choose indeed $\lambda^{(N)}_{\x,\i} = \lambda^{\gap(\i)}$ only for the last layer $N$ of the kernel, which 
depends on the number of gaps in the index set $\i$ but not on the index positions. 
Since~(\ref{eq:multilayer}) involves a kernel $K_{n+1}$ operating on the representation of prefix sequences $\Phi_k^{(n)}(\x_{1:t})$ from layer $n$,
the representation makes sense only if $\Phi_k^{(n)}(\x_{1:t})$ carries mostly local information close to position~$t$.
Otherwise, information from the beginning of the sequence would be overrepresented.
Ideally, we would like the range-dependency of $\Phi_k^{(n)}(\x_{1:t})$ (the size of the window of indices before $t$ that influences the representation, akin to receptive fields in CNNs) to grow with the number of layers in a controllable manner.
This can be achieved by choosing $\lambda^{(n)}_{\x,\i}=\lambda^{|\x|-i_1-k+1}$
for $n < N$, which assigns exponentially more weights to the $k$-mers close to
the end of the sequence.

For the first layer, we recover the single-layer network $\K_k$ defined
in~\eqref{eq:kRKN} by defining $\Phi_k^{(0)}(\x_{1:i_k}) \!=\! \x_{i_k}$ and
$K_1(\x_{i_k},\x'_{j_k}) = e^{\alpha(\langle \x_{i_k},\x'_{j_k}\rangle -1)}$. 
For $n>1$, it remains to define $K_{n+1}$ to be a homogeneous dot-product kernel, as used for
instance in CKNs~\cite{mairal2016end}:
\begin{equation}
    K_{n+1}(\u , \u') = \| \u \|_{\Hcal_n}  \| \u \|_{\Hcal_n} \kappa_n\left (\left\langle \frac{\u}{\|\u\|_{\Hcal_n}},   \frac{\u'}{\|\u'\|_{\Hcal_n}}  \right\rangle_{\Hcal_n}\right)~~~~\text{with}~~~\kappa_n(t) = e^{\alpha_n (t-1)}.\label{eq:dotprod}
\end{equation}
Note that the Gaussian kernel $K_1$ used for 1st layer may also be written as~(\ref{eq:dotprod}) since characters are
normalized. As for CKNs, the goal of homogenization is to prevent norms to grow/vanish exponentially fast with~$n$, while dot-product kernels lend themselves well to neural network interpretations.

As detailed in Appendix~\ref{appendix:multi}, extending the Nystr\"om approximation scheme for the multilayer
construction may be achieved in the same manner as with CKNs---that
is, we learn one approximate embedding $\psi_k^{(n)}$ at each layer,
allowing to replace the inner-products
$\langle \Phi_k^{(n)}(\x_{1:i_t}),\Phi_k^{(n)}(\x'_{1:j_t})\rangle$ by
their approximations
$\langle \psi_k^{(n)}(\x_{1:i_t}),\psi_k^{(n)}(\x'_{1:j_t})\rangle$,
and it is easy to show that the interpretation in terms of RNNs is
still valid since $\K^{(n)}_k$ has the same sum structure
as~(\ref{eq:kRKN}).

\paragraph{Max pooling in RKHS.}
Alignment scores (e.g. Smith-Waterman) in molecular biology rely on a max operation---over
the scores of all possible alignments---to compute similarities
between sequences. However, using max in a string kernel usually
breaks positive definiteness, even though it seems to perform well in practice. To solve such an issue, sum-exponential is used as a proxy in \cite{saigo2004protein}, but it leads to
diagonal dominance issue and makes SVM solvers unable to learn. For
RKN, the sum in~\eqref{eq:kRKNconv} can also be replaced by a max
\begin{equation}
	\K^{\text{max}}_k(\x,\x')= \left\langle \max_{\i\in\I_{\x,k}} \lambda_{\x,\i} \psi_k(\x_\i)  , \max_{\j\in\I_{\x',k}} \lambda_{\x,\j} \psi_k(\x'_\j)\right\rangle, \label{eq:kRKNmax}
\end{equation}
which empirically seems to perform well, but breaks the kernel interpretation, as in~\cite{saigo2004protein}.
The corresponding recursion amounts to replacing all the sum
in~\eqref{eqn:rkn} by a max. 

An alternative way to aggregate local features is the generalized max
pooling (GMP) introduced in \citep{murray2014generalized}, which can
be adapted to the context of RKHSs.  Assuming that before pooling $\x$
is embedded to a set of $N$ local features
$(\varphi_1, \dots, \varphi_N)\in\Hcal^N$, GMP builds a representation
$\varphi^{\gmp}$ whose inner-product with all the local features
$\varphi_i$ is one:
$ \langle \varphi_i, \varphi^{\gmp}
\rangle_{\Hcal}=1,~\text{for}~i=1,\dots,N$. $\varphi^{\gmp}$ coincides
with the regular max when each $\varphi$ is an element of the
canonical basis of a finite representation---\emph{i.e.}, assuming
that at each position, a single feature has value 1 and all others are
$0$.

Since GMP is defined by a set of inner-products constraints, it can be
applied to our approximate kernel embeddings by solving a linear system.
This is compatible with CKN but becomes intractable for RKN which pools across
$|\I_{\x,k}|$ positions. Instead, we heuristically apply GMP over the
set $\psi_k(\x_{1:t})$ for all~$t$ with $\lambda_{\x,\i}=\lambda^{|\x|-i_1-k+1}$, which can be obtained from the RNN
described in Theorem~\ref{thm:kzx}. This amounts to composing GMP with
mean poolings obtained over each prefix of $\x$. We observe that it performs well in our experiments. More details are provided in Appendix~\ref{appendix:gmp}.

\section{Experiments}\label{sec:exp}
We evaluate RKN and compare it to typical string kernels and RNN for protein fold recognition.
Pytorch code is provided with the submission and additional details given in Appendix~\ref{appendix:exp}.

\subsection{Protein fold recognition on SCOP 1.67}

Sequencing technologies provide access to gene and, indirectly,
protein sequences for yet poorly studied species. In order to predict
the 3D structure and function from the linear sequence of these
proteins, it is common to search for evolutionary related ones, a
problem known as homology detection. When no evolutionary related
protein with known structure is available, a---more
difficult---alternative is to resort to protein fold recognition. We
evaluate our RKN on such a task, where the objective is to predict
which proteins share a 3D structure with the
query~\citep{rangwala2005profile}.

Here we consider the Structural Classification Of Proteins (SCOP)
version 1.67~\citep{murzin1995scop}. We follow the preprocessing procedures of
\cite{haandstad2007motif} and remove the sequences that are more than
95\% similar, yielding 85 fold recognition tasks. Each positive
training set is then extended with Uniref50 to make the dataset more
balanced, as proposed in~\cite{hochreiter2007fast}. The resulting
dataset can be downloaded from
\url{http://www.bioinf.jku.at/software/LSTM_protein}. The number of
training samples for each task is typically around 9,000 proteins, whose
length varies from tens to thousands of amino-acids. In all our experiments we use logistic loss. We measure
classification performances using auROC and auROC50 scores (area under
the ROC curve and up to 50\% false positives).

For CKN and RKN, we evaluate both one-hot encoding of amino-acids by
20-dimensional binary vectors and an alternative representation
relying on the BLOSUM62 substitution
matrix~\cite{henikoff1992amino}. Specifically in the latter case, we
represent each amino-acid by the centered and normalized vector of its
corresponding substitution probabilities with other amino-acids. The
local alignment kernel~\eqref{eq:LA}, which we include in our
comparison, natively uses BLOSUM62.

\paragraph{Hyperparameters.}
We follow the training procedure of CKN presented
in~\cite{chen2019bio}. Specifically, for each of the $85$ tasks, we
hold out one quarter of the training samples as a validation set, use
it to tune~$\alpha$, gap penalty $\lambda$ and the regularization
parameter $\mu$ in the prediction layer. These parameters are then
fixed across datasets. RKN training also relies on the alternating
strategy used for CKN: we use an Adam algorithm to update anchor
points, and the L-BFGS algorithm to optimize the prediction layer. We
train 100 epochs for each dataset: the initial learning rate for Adam
is fixed to 0.05 and is halved as long as there is no decrease of the
validation loss for 5 successive epochs. We fix $k$ to 10, the number
of anchor points $q$ to 128 and use single layer CKN and RKN
throughout the experiments.

\paragraph{Implementation details for unsupervised models.}
The anchor points for CKN and RKN are learned by k-means on 30,000 extracted $k$-mers from each dataset. The resulting sequence representations are standardized by removing mean and dividing by standard deviation and are used within a logistic regression classifier. $\alpha$ in Gaussian kernel and the parameter $\lambda$ are chosen based on validation loss and are fixed across the datasets. $\mu$ for regularization is chosen by a 5-fold cross validation on each dataset. As before, we fix $k$ to 10 and the number of anchor points $q$ to 1024. Note that the performance could be improved with larger $q$ as observed in \citep{chen2019bio}, at a higher computational cost. 

\paragraph{Comparisons and results.}

The results are shown in Table \ref{tab:scop}. The blosum62 version of
CKN and RKN outperform all other methods. Improvement against the
mismatch and LA kernels is likely caused by end-to-end trained kernel
networks learning a task-specific representation in the form of a
sparse set of motifs, whereas data-independent kernels lead to
learning a dense function over the set of descriptors.  This
difference can have a regularizing effect akin to the $\ell_1$-norm in
the parametric world, by reducing the dimension of the learned linear
function $w$ while retaining relevant features for the prediction
task. GPkernel also learns motifs, but relies on the exact presence of
discrete motifs. Finally, both LSTM and~\cite{lei2017deriving} are
based on RNNs but are outperformed by kernel networks. The latter was
designed and optimized for NLP tasks and yields a $0.4$ auROC50 on this task.

RKNs outperform CKNs, albeit not by a large margin. Interestingly, as
the two kernels only differ by their allowing gaps when comparing
sequences, this results suggests that this aspect is not the most
important for identifying common foldings in a one versus all setting:
as the learned function discriminates on fold from all others, it may
rely on coarser features and not exploit more subtle ones such as
gappy motifs. In particular, the advantage of the LA-kernel against
its mismatch counterpart is more likely caused by other differences
than gap modelling, namely using a max rather than a mean pooling of
$k$-mer similarities across the sequence, and a general substitution
matrix rather than a Dirac function to quantify
mismatches. Consistently, within kernel networks GMP systematically
outperforms mean pooling, while being slightly behind max pooling.

Additional details and results, scatter plots, and pairwise tests
between methods to assess the statistical significance of our
conclusions are provided in Appendix~\ref{appendix:exp}. Note that when $k=14$, 
the auROC and auROC50 further increase to 0.877 and 0.636 respectively.

\begin{table}
	\centering
	\caption{Average auROC and auROC50 for SCOP fold recognition benchmark. LA-kernel uses BLOSUM62 to compare amino acids which is a little different from our encoding approach. Details about pairwise statistical tests between methods can be found in Appendix~E.}\label{tab:scop}
	\begin{tabular}{lccccc}
		\toprule
		Method & pooling & \multicolumn{2}{c}{one-hot} & \multicolumn{2}{c}{BLOSUM62} \\ 
		& & auROC & auROC50 & auROC & auROC50 \\ \midrule
		GPkernel \citep{haandstad2007motif} & & 0.844 & 0.514 & \multirow{3}{*}{--} &\multirow{3}{*}{--} \\ 
		SVM-pairwise \citep{liao2003combining} & & 0.724 & 0.359 & \\
		Mismatch \citep{leslie2004mismatch} & & 0.814 & 0.467 &  \\
		LA-kernel \citep{saigo2004protein} & & -- & -- & 0.834 & 0.504  \\ \midrule
		LSTM \citep{hochreiter2007fast}& & 0.830 & 0.566 & -- & -- \\ \midrule
		CKN-seq  \citep{chen2019bio} & mean & 0.827 & 0.536 & 0.843 & 0.563 \\
		CKN-seq  \citep{chen2019bio} & max & 0.837 & 0.572 & 0.866 & 0.621 \\ 
		CKN-seq & GMP & 0.838 & 0.561 & 0.856 & 0.608 \\ 
		CKN-seq (unsup)\citep{chen2019bio} & mean & 0.804 & 0.493 & 0.827 & 0.548 \\ \midrule
		RKN ($\lambda=0$) & mean & 0.829 & 0.542 & 0.838 &
		0.563 \\ RKN & mean & 0.829 & 0.541 & 0.840 & 0.571 \\
		RKN ($\lambda=0$) & max & 0.840 & 0.575 & 0.862 &
		0.618 \\ RKN & max & 0.844 & \textbf{0.587}
		& \textbf{0.871} & \textbf{0.629} \\
		RKN ($\lambda=0$) & GMP & 0.840 & 0.563 & 0.855 & 0.598 \\
		RKN & GMP & \textbf{0.848} & 0.570 & 0.852 & 0.609 \\ 
		RKN (unsup) & mean & 0.805 & 0.504 & 0.833 & 0.570 \\
		\bottomrule
	\end{tabular}
\end{table}

\subsection{Protein fold classification on SCOP 2.06}
We further benchmark RKN in a fold classification task, following the
protocols used in \cite{10.1093/bioinformatics/btx780}. Specifically,
the training and validation datasets are composed of 14699 and 2013
sequences from SCOP 1.75, belonging to 1195 different folds. The test
set consists of 2533 sequences from SCOP 2.06, after removing the
sequences with similarity greater than 40\% with SCOP 1.75. The input
sequence feature is represented by a vector of 45 dimensions,
consisting of a 20-dimensional one-hot encoding of the sequence, a
20-dimensional position-specific scoring matrix (PSSM) representing
the profile of amino acids, a 3-class secondary structure represented
by a one-hot vector and a 2-class solvent accessibility. We further
normalize each type of the feature vectors to have unit $\ell_2$-norm,
which is done for each sequence position. More dataset details can be
found in \cite{10.1093/bioinformatics/btx780}. We use mean pooling for
both CKN and RKN models, as it is more stable during training for multi-class classification. The other hyperparameters are chosen in the same way as previously. More details about hyperparameter search grid can be found in Appendix~E.

The accuracy results are obtained by averaging 10 different runs and
are shown in Table \ref{tab:scop206}, stratified by prediction
difficulty (family/superfamily/fold, more details can be found
in~\cite{10.1093/bioinformatics/btx780}). By contrast to what we
observed on SCOP 1.67, RKN sometimes yields a large improvement on CKN
for fold classification, especially for detecting distant
homologies. This suggests that accounting for gaps does help in some
fold prediction tasks, at least in a multi-class context where a
single function is learned for each fold.

\begin{table}
\centering
\caption{Classification accuracy for SCOP 2.06. The complete table with error bars can be found in Appendix~E.}\label{tab:scop206}
	\resizebox{\textwidth}{!}{
	\begin{tabular}{lccccccc}
		\toprule
		Method & $\sharp$Params & \multicolumn{3}{c}{Accuracy on SCOP 2.06} & \multicolumn{3}{c}{Level-stratified accuracy (top1/top5/top10)} \\ 
		& & top 1 & top 5 & top 10 & family & superfamily & fold  \\ \midrule
		PSI-BLAST & - & 84.53 & 86.48 & 87.34 & 82.20/84.50/85.30 & 86.90/88.40/89.30 & 18.90/35.10/35.10 \\
		DeepSF & 920k & 73.00  &	 90.25 &	 94.51 & 75.87/91.77/95.14 & 72.23/90.08/94.70 & 51.35/67.57/72.97 \\ 
                CKN (128 filters)      & 211k     & 76.30 & 92.17 & 95.27 & 83.30/94.22/96.00 & 74.03/91.83/95.34 & 43.78/67.03/77.57 \\
           CKN (512 filters)      & 843k     & 84.11 & 94.29 & 96.36 & \textbf{90.24}/\textbf{95.77}/\textbf{97.21} & 82.33/94.20/96.35 & 45.41/69.19/79.73 \\
		\midrule
           RKN (128 filters)               & 211k     & 77.82 & 92.89 & 95.51 & 76.91/93.13/95.70 & 78.56/92.98/95.53 & 60.54/83.78/\textbf{90.54} \\
           RKN (512 filters)               & 843k     & \textbf{85.29} & \textbf{94.95} & \textbf{96.54} & 84.31/94.80/96.74 & \textbf{85.99}/\textbf{95.22}/\textbf{96.60} & \textbf{71.35}/\textbf{84.86}/89.73 \\
		\bottomrule
	\end{tabular}
	}
\end{table}

\subsubsection*{Acknowledgments}
 We thank the anonymous reviewers for their insightful comments and suggestions. This work has been supported by the grants from ANR (FAST-BIG project ANR-17-CE23-0011-01), by the ERC grant number 714381 (SOLARIS), and ANR 3IA
MIAI@Grenoble Alpes.

\bibliographystyle{abbrv}
\bibliography{mybib}

\newpage

\appendix
\newcommand\Abf{\mathbf A}
\def\F{\mathbf F}
\newcommand\B{\mathbf B}
\newcommand\U{\mathbf U}
\newcommand\D{\boldsymbol \Delta}
\def\E{\mathbb E}

\section{Nystr\"om Approximation for Single-Layer RKN}\label{appendix:nystrom}
We detail here the Nytr\"om approximation presented in Section~\ref{subsec:nystrom}, which we recall here for a sequence $\x$:
\begin{equation}
    \psi_k(\x) =  K_{ZZ}^{-1/2} \sum_{\i\in\I_{\x,k}} \lambda_{\x,\i}K_Z(\x_\i).\label{eq:psik_app}
    \end{equation}
    Consider then the computation of $\psi_j(\x)$ defined in~(\ref{eq:psik_app}) 
    for $j=1,\ldots,k$ given a set of anchor points $Z_k=\{\z_1,\ldots,\z_q\}$ with the $\z_i$'s in $\Real^{d \times k}$.
    Given the notations introduced in Section~\ref{subsec:nystrom}, we are now in shape to prove Theorem~\ref{thm:kzx}.
\begin{proof}
	The proof is based on Theorem 1 of \cite{lei2017deriving} and definition 2 of \cite{lodhi2002text}. For $\i\in\I_{\x,j}$, let us denote by $\i'=(i_1,\dots,i_{j-1})$ the $j-1$ first entries of $\i$. We first notice that for the Gaussian kernel $K$, we have the following factorization relation for $i=1,\dots,q$
	\begin{align*}
		K(\x_{\i},[\z_{i}]_{1:j})&=e^{\alpha (\langle \x_{\i},[\z_{i}]_{1:j}\rangle - j)} \\
		&=e^{\alpha (\langle \x_{\i'},[\z_{i}]_{1:j-1}\rangle - (j-1))} e^{\alpha (\langle \x_{i_j}, \z_j \rangle -1)} \\
		&=K(\x_{\i'},[\z_{i}]_{1:j-1}) e^{\alpha (\langle \x_{i_j}, \z_j \rangle -1)}.
	\end{align*}
	Thus 
	\begin{equation*}
		K_{Z_j}(\x_{\i})=K_{Z_{j-1}}(\x_{\i'})\odot \b_j[i_j],
	\end{equation*}
	with $\b_j[t]$ defined as in the theorem. 
	
	Let us denote $\sum_{\i\in\I_{\x_{1:t},j}} \lambda_{\x_{1:t},\i}K_{Z_j}(\x_\i)$ by $\tilde{\c}_j[t]$ if $\lambda_{\x,\i}=\lambda^{|\x|-i_1-j+1}$ and by $\tilde{\h}_j[t]$ if $\lambda_{\x,\i}=\lambda^{\gap(\i)}$. We want to prove that $\tilde{\c}_j[t]=\c_j[t]$ and $\tilde{\h}_j[t]=\h_j[t]$. First, it is clear that $\tilde{\c}_j[0]=0$ for any $j$. We show by induction on $j$ that $\tilde{\c}_j[t]=\c_j[t]$. When $j=1$, we have 
	\begin{align*}
		\tilde{\c}_1[t]&=\sum_{1\le i_1\le t } \lambda^{t-i_1} K_{Z_1}(\x_{i_1}) \\
		&=\sum_{1\le i_1 \le t-1} \lambda^{t-i_1} K_{Z_1}(\x_{i_1}) + K_{Z_1}(\x_t), \\
		&=\lambda \tilde{\c}_1[t-1] + \b_1[t].
	\end{align*}
	$\tilde{\c}_1[t]$ and $\c_1[t]$ have the same recursion and initial state thus are identical.
	When $j>1$ and suppose that $\tilde{\c}_{j-1}[t]=\c_{j-1}[t]$, then we have
	\begin{align*}
		\tilde{\c}_j[t]&=\sum_{\i \in\I_{\x_{1:t},j}} \lambda^{t-i_1-j+1} K_{Z_j}(\x_\i),\\
		&= \underbrace{\sum_{\i\in \I_{\x_{1:t-1},j}}  \lambda^{t-i_1-j+1} K_{Z_j}(\x_{\i})}_{i_j<t} + \underbrace{\sum_{\i'\in\I_{\x_{1:t-1},j-1}} \lambda^{(t-1)-s_1-(j-1)+1} K_{Z_{j-1}}(\x_{i'})\odot \b_{j}[t]}_{i_j=t}, \\
		&= \lambda \tilde{\c}_j[t-1] + \tilde{\c}_{j-1}[t]\odot \b_j[t], \\
		&=\lambda \tilde{\c}_j[t-1] + \c_{j-1}[t]\odot \b_j[t].
	\end{align*}
	$\tilde{\c}_j[t]$ and $\c_j[t]$ have the same recursion and initial state. We have thus proved that $\tilde{\c}_j[t]=c_j[t]$. Let us move on for proving $\tilde{\h}_j[t]=\h_j[t]$ by showing that they have the same initial state and recursion. It is straightforward that $\tilde{\h}_j[0]=0$, then for $1\le j\le k$ we have
	\begin{align*}
		\tilde{\h}_j[t]=&\sum_{\i \in\I_{\x_{1:t},j}} \lambda^{i_j-i_1-j+1} K_{Z_j}(\x_i),\\
		=&\sum_{\i\in \I_{\x_{1:t-1},j}}  \lambda^{i_j-i_1-j+1} K_{Z_j}(\x_{\i}) + \sum_{\i'\in\I_{\x_{1:t-1},j-1}} \lambda^{(t-1)-s_1-(j-1)+1} K_{Z_{j-1}}(\x_{i'})\odot \b_{j}[t] \\
		=& \tilde{\h}_j[t-1] + \c_{j-1}[t]\odot \b_j[t].
	\end{align*}
	Therefore $\tilde{\h}_j[t]=\h_j[t]$.
\end{proof}

\section{Back-propagation for Matrix Inverse Square Root}\label{appendix:backprop}
In Section~\ref{subsec:nystrom}, we have described an end-to-end scheme to jointly optimize $Z$ and $\w$. The back-propagation of $Z$ requires computing that of the matrix inverse square root operation as it is involved in the approximate feature map of $\x$ as shown in \eqref{eq:psik_app}. The back-propagation formula is given by the following proposition, which is based on an errata of \citep{mairal2016end} and we include it here for completeness.
\begin{prop}
	Given $\Abf$ a symmetric positive definite matrix in $\mathbb R^{n\times n}$ and the eigencomposition of $\Abf$ is written as $\Abf=\U\D \U^{\top}$ where $\U$ is orthogonal and $\D$ is diagonal with eigenvalues $\delta_1,\dots,\delta_n$. Then
	\begin{equation}
		d(\Abf^{-{\frac{1}{2}}})= - \U (\F \circ (\U^\top d\Abf \U)) \U^\top.
	\end{equation}
\end{prop}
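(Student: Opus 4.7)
The plan is to start from the defining identity $\Abf^{-1/2}\,\Abf^{-1/2} = \Abf^{-1}$, differentiate it, and then diagonalize to solve for $d(\Abf^{-1/2})$ entry by entry. Writing $\B = \Abf^{-1/2}$ for brevity, I first apply the Leibniz rule to $\B^2 = \Abf^{-1}$ to get
\begin{equation*}
  (d\B)\,\B + \B\,(d\B) \;=\; d(\Abf^{-1}) \;=\; -\Abf^{-1}\,(d\Abf)\,\Abf^{-1},
\end{equation*}
where the last equality is the standard differential of the matrix inverse (obtained by differentiating $\Abf\Abf^{-1}=I$). This is a Sylvester-type equation in $d\B$ whose coefficient matrix is $\B$ itself, so it is natural to pass to the basis in which $\B$ is diagonal.

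Next, I conjugate by the orthogonal eigenbasis $\U$. Defining $\widetilde{d\B} := \U^\top d\B\,\U$ and $\widetilde{d\Abf} := \U^\top d\Abf\,\U$, the left-hand side becomes $\widetilde{d\B}\,\D^{-1/2} + \D^{-1/2}\,\widetilde{d\B}$, since $\U^\top \B \U = \D^{-1/2}$. On the right-hand side, $\U^\top \Abf^{-1} \U = \D^{-1}$, so we get $-\D^{-1}\,\widetilde{d\Abf}\,\D^{-1}$. Reading this equation entry by entry yields, for each $(i,j)$,
\begin{equation*}
  \bigl(\widetilde{d\B}\bigr)_{ij}\bigl(\delta_i^{-1/2} + \delta_j^{-1/2}\bigr) \;=\; -\,\delta_i^{-1}\delta_j^{-1}\,\bigl(\widetilde{d\Abf}\bigr)_{ij}.
\end{equation*}
Since $\Abf$ is positive definite, $\delta_i^{-1/2} + \delta_j^{-1/2} > 0$ for all $i,j$, so this scalar equation can be solved directly:
\begin{equation*}
  \bigl(\widetilde{d\B}\bigr)_{ij} \;=\; -\,\frac{1}{\delta_i^{1/2}\delta_j^{1/2}\bigl(\delta_i^{1/2}+\delta_j^{1/2}\bigr)}\,\bigl(\widetilde{d\Abf}\bigr)_{ij}.
\end{equation*}
This is exactly the Hadamard product $-\,\F \circ \widetilde{d\Abf}$, where $\F$ is the symmetric matrix with entries $F_{ij} = 1/\bigl(\delta_i^{1/2}\delta_j^{1/2}(\delta_i^{1/2}+\delta_j^{1/2})\bigr)$ (the matrix implicitly referenced in the statement).

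Finally, I undo the orthogonal change of basis: $d\B = \U\,\widetilde{d\B}\,\U^\top = -\,\U\bigl(\F\circ(\U^\top d\Abf\,\U)\bigr)\U^\top$, which is the claimed formula. The one subtlety worth highlighting is the diagonalization step: $d\Abf$ need not commute with $\Abf$, so $\widetilde{d\Abf}$ is a general matrix and the resulting system genuinely requires an entrywise (Hadamard) solve rather than a simple functional-calculus identity. Everything else is routine matrix differential calculus; positive-definiteness of $\Abf$ guarantees that no denominator in $\F$ vanishes, so the solution is unique.
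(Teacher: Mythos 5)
Your proof is correct and follows essentially the same route as the paper's: differentiate $\Abf^{-1/2}\Abf^{-1/2}=\Abf^{-1}$ together with $\Abf\Abf^{-1}=I$, conjugate by the eigenbasis $\U$, and solve the resulting Sylvester-type equation entrywise via the Hadamard matrix $\F$. The only difference is that you spell out the entrywise algebra that the paper dismisses as ``easy to show,'' which is a welcome addition rather than a deviation.
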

\begin{proof}
	First, let us differentiate with respect to the inverse matrix $\Abf^{-1}$:
\begin{displaymath}
   \Abf^{-1}\Abf = \I \qquad \Longrightarrow \qquad  \Abf^{-1}d\Abf + d(\Abf^{-1}) \Abf = 0 \qquad\Longrightarrow \qquad d(\Abf^{-1}) = - \Abf^{-1} d\Abf \Abf^{-1}.
\end{displaymath}
Then, by applying the same (classical) trick,
\begin{displaymath}
   \Abf^{-{\frac{1}{2}}} \Abf^{-{\frac{1}{2}}} = \Abf^{-1} \qquad \Longrightarrow \qquad d(\Abf^{-{\frac{1}{2}}}) \Abf^{-{\frac{1}{2}}} + \Abf^{-{\frac{1}{2}}}d(\Abf^{-{\frac{1}{2}}})= d(\Abf^{-1}) = - \Abf^{-1} d\Abf \Abf^{-1}.
\end{displaymath}
By multiplying the last relation by $\U^\top$ on the left and by $\U$ on the right.
\begin{displaymath}
     \U^\top d(\Abf^{-{\frac{1}{2}}}) \U \D^{-\frac{1}{2}}  + \D^{-\frac{1}{2}} \U^\top d(\Abf^{-{\frac{1}{2}}}) \U= - \D^{-1} \U^\top d\Abf \U \D^{-1}.
\end{displaymath}
Note that $\D$ is diagonal. By introducing the matrix $\F$ such that $\F_{kl} = \frac{1}{\sqrt{\delta_k}\sqrt{\delta_l}(\sqrt{\delta_k} + \sqrt{\delta_l})}$,
it is then easy to show that 
\begin{displaymath}
     \U^\top d(\Abf^{-{\frac{1}{2}}}) \U= - \F \circ (\U^\top d\Abf \U),
\end{displaymath}
where $\circ$ is the Hadamard product between matrices. Then, we are left with 
\begin{displaymath}
     d(\Abf^{-{\frac{1}{2}}})= - \U (\F \circ (\U^\top d\Abf \U)) \U^\top.
\end{displaymath}
\end{proof}
When doing back-propagation, one is usually interested in computing a
quantity $\bar{\Abf}$ such that given $\bar{\B}$ (with appropriate dimensions),
we have
\begin{displaymath}
    \langle \bar{\B}, d(\Abf^{-{\frac{1}{2}}})  \rangle_F = \langle \bar{\Abf},  d\Abf \rangle_F,
\end{displaymath}
see~\cite{giles2008collected}, for instance.
Here, $\langle, \rangle_F$ denotes the Frobenius inner product. Then, it is easy to show that
\begin{displaymath}
    \bar{\Abf} = - \U (\F \circ ( \U^\top \bar{\B} \U )) \U^\top.
\end{displaymath}

\section{Multilayer Construction of RKN}\label{appendix:multi}
For multilayer RKN, assume that we have defined $\K^{(n)}$ the $n$-th layer kernel. To simplify the notation below, we consider that an input sequence $\x$ is encoded at layer~$n$ as $\x^{(n)}:=(\Phi_k^{(n)}(\x_1), \Phi_k^{(n)}(\x_{1:2}),\dots, \Phi_k^{(n)}(\x) )$ where the feature map at position $t$ is $\x^{(n)}_t=\Phi_k^{(n)}(\x_{1:t})$. The $(n+1)$-layer kernel is defined by induction by
\begin{equation}\label{eq:rkn_multi_app}
	\K_k^{(n+1)} (\x,\x')=\sum_{\i\in\I_{\x,k},\j\in\I_{\x',k}}\lambda^{(n)}_{\x,\i}\lambda^{(n)}_{\x',\j} \prod_{t=1}^k K_{n+1}(\x^{(n)}_{i_t},\x'^{(n)}_{j_t}),
\end{equation}
where $K_{n+1}$ is defined in~(\ref{eq:dotprod}. With the choice of weights described in Section~\ref{subsec:ext},
the construction scheme for an $n$-layer RKN is illustrated in Figure~\ref{fig:rkn_multi}
\begin{figure}
	\centering
	\includegraphics[width=0.7\textwidth]{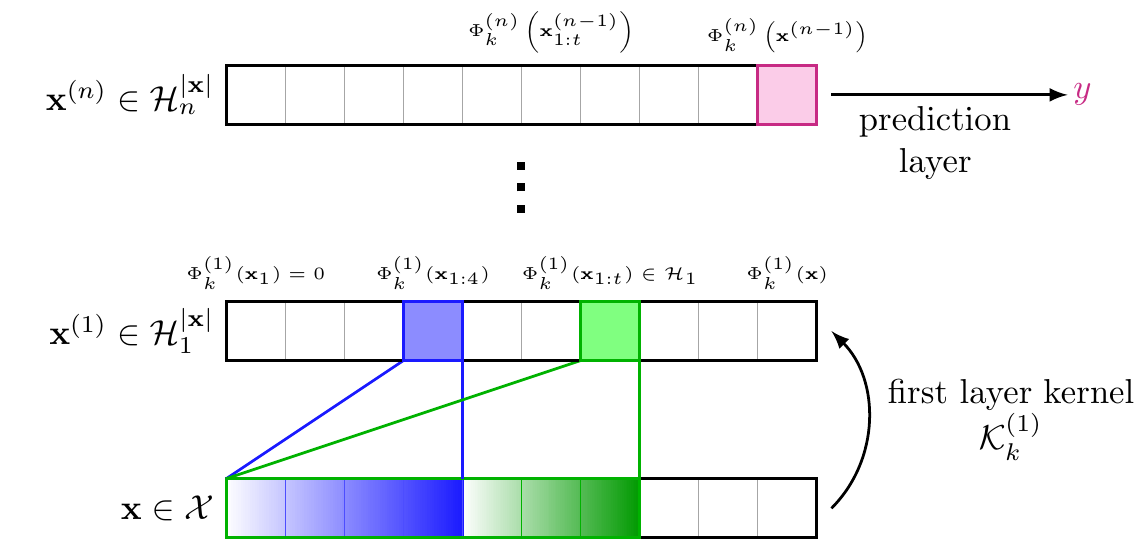}
	\caption{Multilayer construction of RKN: an example with $k=4$.}\label{fig:rkn_multi}
\end{figure}
The Nystr\"om approximation scheme for multilayer RKN is straightforward by inductively applying the Nytr\"om method to the kernels $\K^{(1)},\dots,\K^{(n)}$ from bottom to top layers. Specifically, assume that $\K^{(n)}(\x,\x')$ is approximated by $\langle \psi^{(n)}_k(\x), \psi^{(n)}_k(\x')\rangle_{\R^{q_n}}$ such that the approximate feature map of $\x^{(n)}$ at position $t$ is $\psi^{(n)}_k(\x_{1:t})$. Now Consider a set of anchor points $Z_k=\{\z_1,\ldots,\z_{q_{n+1}}\}$ with the $\z_i$'s in $\Real^{q_n \times k}$ which have unit norm at each column. We use the same notations as in single-layer construction. Then very similar to the single-layer RKN, the embeddings $(\psi^{(n+1)}_j(\x_{1:t}^{(n)}))_{1=1,\dots,k, t=1,\dots,|\x^{(n)}|}$ are given by the following recursion 
\begin{theorem}
	For any $j\in\{1,\dots,k\}$ and
  $t\in\{1,\dots,|\x^{(n)}|\}$,
  \begin{equation*}
   \psi_j^{(n+1)}(\x_{1:t}^{(n)}) =K_{Z_j Z_j}^{-1/2}
\begin{cases}
     \c_j[t] & \text{if } \lambda^{(n)}_{\x,\i}=\lambda^{|\x^{(n)}|-i_1-j+1}, \\
     \h_j[t] & \text{if } \lambda^{(n)}_{\x,\i}=\lambda^{\gap(\i)},
    \end{cases}
  \end{equation*}
 where $c_j[t]$ and $h_j[t]$ form a sequence of vectors in $\Real^{q_{n+1}}$ indexed by $t$ such that $c_j[0]=h_j[0]=0$, and $c_0[t]$ is a vector that contains only ones, while the sequence obeys the recursion
  \begin{equation}\label{eq:rkn_multi_ap}
    \begin{aligned}
      \c_j[t]&=\lambda\c_j[t-1]+\c_{j-1}[t-1] \odot \b_j[t] & 1\le j\le k,\\
      \h_j[t]&=\h_j[t-1]+\c_{j-1}[t-1]\odot \b_j[t]  & 1\le j\le k,
    \end{aligned}
  \end{equation}
  where $\odot$ is the elementwise multiplication operator and $\b_j[t]$ whose entry $i$ is $K_{n+1}(\z^i_j,\x_t^{(n)})=\|\x_t^{(n)}\| \kappa_n \left(\left\langle \z^i_j, \frac{\x_t^{(n)}}{\|\x_t^{(n)}\|} \right\rangle \right)$.
\end{theorem}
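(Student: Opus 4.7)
The plan is to mirror the proof of Theorem~\ref{thm:kzx} from Appendix~\ref{appendix:nystrom} essentially verbatim, substituting the layer-$n$ feature sequence $\x^{(n)} = (\x^{(n)}_1,\ldots,\x^{(n)}_{|\x^{(n)}|})$ for the raw sequence $\x$, and the homogeneous dot-product kernel $K_{n+1}$ for the Gaussian character kernel of the first layer. The multilayer kernel~\eqref{eq:rkn_multi_app} retains the same sum-of-products structure as~(\ref{eq:kRKN}), so the whole dynamic-programming argument transfers with only cosmetic changes.

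First I would unfold the Nystr\"om approximation: for a set $Z_j = \{[\z_1]_{1:j},\ldots,[\z_{q_{n+1}}]_{1:j}\}$ of anchor points with unit-norm columns in $\R^{q_n}$, the approximate embedding of $\x^{(n)}_{1:t}$ reads
\begin{equation*}
\psi_j^{(n+1)}(\x^{(n)}_{1:t}) \;=\; K_{Z_jZ_j}^{-1/2}\!\!\sum_{\i\in\I_{\x^{(n)}_{1:t},j}}\!\!\lambda^{(n)}_{\x,\i}\,K_{Z_j}(\x^{(n)}_\i),
\end{equation*}
where $K_{Z_j}(\x^{(n)}_\i)$ stacks the values $\prod_{s=1}^{j}K_{n+1}(\x^{(n)}_{i_s},\z_i^s)$ into a vector in $\R^{q_{n+1}}$.

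Next I would establish the key factorization that makes the recursion possible. Writing $\i'=(i_1,\ldots,i_{j-1})$ and using that~\eqref{eq:rkn_multi_app} is already a product of $K_{n+1}$ factors across positions, one obtains
\begin{equation*}
K_{Z_j}(\x^{(n)}_\i) \;=\; K_{Z_{j-1}}(\x^{(n)}_{\i'}) \odot \b_j[i_j],
\end{equation*}
where $\b_j[t]$ has entries $K_{n+1}(\z_j^i,\x_t^{(n)}) = \|\x_t^{(n)}\|\,\kappa_n(\langle\z_j^i,\x_t^{(n)}/\|\x_t^{(n)}\|\rangle)$. Note this is the exact counterpart of the first factorization step in the single-layer proof, now inherited directly from the multiplicative structure of the layer-$(n{+}1)$ kernel rather than from the Gaussian identity $e^{\alpha(\langle \x_\i,\z\rangle - j)}=\prod_t e^{\alpha(\langle \x_{i_t},\z_t\rangle - 1)}$.

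With this in hand, I would define $\tilde\c_j[t]$ and $\tilde\h_j[t]$ as the two sums appearing in the two cases of the theorem and show by induction on $j$ that they satisfy the recursion~\eqref{eq:rkn_multi_ap}. Splitting each index set $\I_{\x^{(n)}_{1:t},j}$ into indices with $i_j<t$ and $i_j=t$: the first part contributes $\lambda\,\tilde\c_j[t-1]$ (resp.\ $\tilde\h_j[t-1]$), while the second part, using the factorization above together with the induction hypothesis $\tilde\c_{j-1}[t-1]=\c_{j-1}[t-1]$, contributes $\c_{j-1}[t-1]\odot\b_j[t]$. The base cases $j=0,1$ and $t=0$ are verified exactly as in Appendix~\ref{appendix:nystrom}, and multiplying through by $K_{Z_jZ_j}^{-1/2}$ yields the stated identity. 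There is no genuine obstacle here; the only thing to watch carefully is the bookkeeping of which objects live in $\R^{q_n}$ versus $\R^{q_{n+1}}$, and the fact that the factorization step requires columns of the anchor tensors $\z_i$ to be unit-norm in the feature space of layer $n$, so that $K_{n+1}$ behaves as a proper dot-product kernel matching the form used in the Nystr\"om scheme.
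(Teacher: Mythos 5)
Your proposal is correct and takes essentially the same route as the paper, whose own proof is just the one-line remark that the argument of Theorem~\ref{thm:kzx} carries over once the Gaussian character kernel is replaced by $K_{n+1}$ --- exactly the substitution you spell out, with the factorization step now coming from the product structure of the layer-$(n+1)$ kernel. Your detailed write-up is consistent with the stated recursion (correctly using $\c_{j-1}[t-1]\odot\b_j[t]$ for the $i_j=t$ terms), so no gap remains.
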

\begin{proof}
	The proof can be obtained by that of Theorem~\ref{thm:kzx} by replacing the Gaussian kernel $e^{\alpha (\langle \x_t,\z_j^i )}$ with the kernel $K_{n+1}(\x_t^{(n)},\z_j^i)$.
\end{proof}

\section{Generalized Max Pooling for RKN}\label{appendix:gmp}
Assume that a sequence $\x$ is embedded to $(\varphi_1, \dots,\varphi_n)\in\Hcal^n$ local features, as in Section~\ref{subsec:ext}. Generalized max pooling (GMP) looks for a representation $\varphi^{\gmp}$ such that the inner product between this vector and all the local representations is one:
$	\langle \varphi_i, \varphi^{\gmp} \rangle_{\Hcal}=1,~\text{for }i=1,\dots,n.$
Assuming that each $\varphi_i$ is now represented by a vector $\psi_i$ in $\Real^q$, 
the above problem can be approximately solved by search an embedding vector $\psi^{\gmp}$ in $\R^q$ such that $\langle \psi_i, \psi^{\gmp} \rangle =1$ for $i=1,\dots,n$. In practice, and to prevent ill-conditioned problems, as shown in \citep{murray2014generalized}, it is possible to solve a ridge regression problem:
\begin{equation}
	\psi^{\gmp}=\argmin_{\psi\in\R^q} \| \Psi ^{\top} \psi - \mathbf 1 \|^2 + \gamma \| \psi\|^2,
\end{equation}
where $\Psi=[\psi_1,\dots,\psi_n]\in\R^{q\times n}$ and $\mathbf 1$ denotes the $n$-dimensional vectors with only 1 as entries. The solution is simply given by $\psi^{\gmp}=(\Psi \Psi^{\top}+\gamma I )^{-1}\Psi \mathbf 1$. It requires inverting a $q\times q$ matrix which is usually tractable when the number of anchor points is small. In particular, when $\psi_i=K_{ZZ}^{-1/2} K_Z(\x_i)$ the Nystr\"om approximation of a local feature map, we have $\Psi=K_{ZZ}^{-1/2} K_{ZX}$ with $[K_{ZX}]_{ji}=K(\z_j,\x_i)$ and thus
\begin{equation*}
	\psi^{\gmp}=K_{ZZ}^{\frac{1}{2}} (K_{ZX}K_{ZX}^{\top}+\gamma K_{ZZ} )^{-1} K_{ZX}\mathbf 1.
\end{equation*}

\section{Additional Experimental Material}\label{appendix:exp}
In this section, we provide additional details about experiments and scatter plots with pairwise statistical tests.

\subsection{Protein fold recognition on SCOP 1.67}
\paragraph{Hyperparameter search grids.}
Here, we first provide the grids used for hyperparameter search. In our experiments, we use $\sigma$ instead of $\alpha$ such that $\alpha=1/k \sigma^2$. The search range is specified in Table \ref{tab:hyper}.
\begin{table}[hbtp]
\centering
	\caption{Hyperparameter search range.}\label{tab:hyper}
	\begin{tabular}{lcc}
		\toprule
		hyperparameter & search range \\ \midrule
		$\sigma$ ($\alpha=1/k \sigma^2$) & [0.3;0.4;0.5;0.6] \\
		$\mu$ for mean pooling & [1e-06;1e-05;1e-04] \\
		$\mu$ for max pooling & [0.001;0.01;0.1;1.0] \\
		$\lambda$ & integer multipliers of 0.05 in [0;1]\\
		\bottomrule
	\end{tabular}
\end{table}

\paragraph{Comparison of unsupervised CKNs and RKNs.}
Then, we provide an additional table of results to compare the unsupervised models of CKN and RKN.
In this unsupervised regime, mean pooling perform better than max pooling, which is different than 
what we have observed in the supervised case. RKN tend to work better than CKN, while RKN-sum---that is, using the kernel $\K_{\text{sum}}$ instead of $\K_k$,
works better than RKN.
\begin{table}[hbtp]
	\centering
	\caption{Comparison of unsupervised CKN and RKN with 1024 anchor points.}
	\begin{tabular}{lccccc}
		\toprule
		Method & Pooling & \multicolumn{2}{c}{one-hot} & \multicolumn{2}{c}{BLOSUM62} \\
		& & auROC & auROC50 & auROC & auROC50 \\ \midrule
		CKN & mean & 0.804 & 0.493 & 0.827 & 0.548 \\
		CKN & max & 0.795 & 0.480 & 0.821 & 0.545 \\ 
		RKN ($\lambda=0$) & mean & 0.804 & 0.500 & 0.833 & 0.565 \\
		RKN  & mean & 0.805 & 0.504 & 0.833 & 0.570 \\
		RKN ($\lambda=0$) & max & 0.795 & 0.482 & 0.824 & 0.537 \\ 
		RKN  & max & 0.801 & 0.492 & 0.824 & 0.542 \\
		RKN-sum ($\lambda=0$) & mean & 0.820 & 0.526 & 0.834 & 0.567 \\
		RKN-sum  & mean & 0.821 & 0.527 & 0.834 & 0.565  \\
		RKN-sum ($\lambda=0$) & max & 0.825 & 0.526 & 0.837 & 0.563 \\
		RKN-sum  & max & 0.825 & 0.528 & 0.837 & 0.564 \\
		\bottomrule
	\end{tabular}
\end{table}

\paragraph{Study of filter number $q$ and size $k$.}
Here we use max pooling and fix $\sigma$ to 0.4 and $\lambda$ to 0.1. When $q$ varies $k$ is fixed to 10 and $q$ is fixed to 128 when $k$ varies. We show here the performance of RKN with different choices of $q$ and $k$. The 
gap hyperparameter $\lambda$ is chosen optimally for each $q$ and $k$. The results are shown in Figure~\ref{fig:boxplot}. 

\begin{figure}
	\centering
	\includegraphics[width=0.45\linewidth]{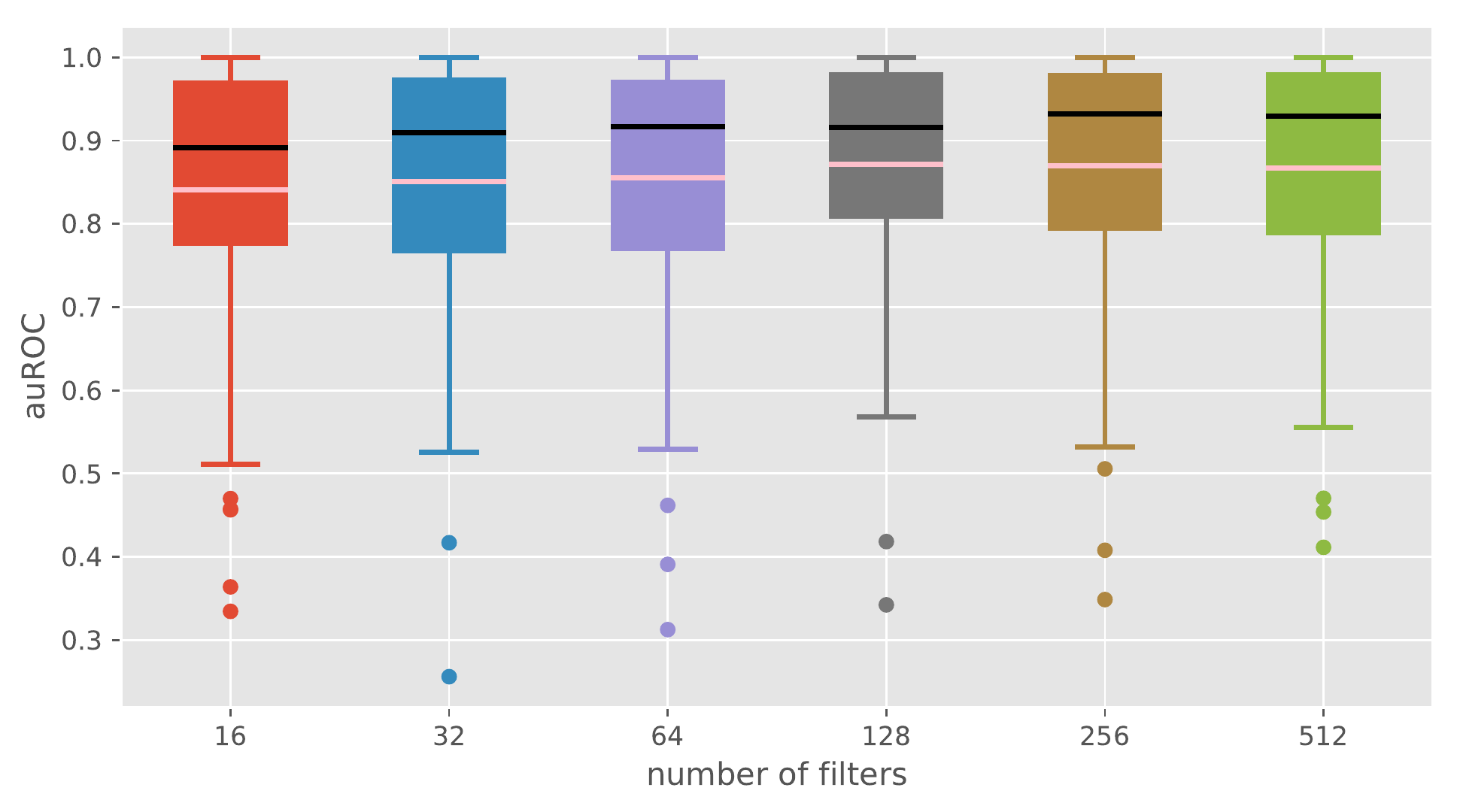}
	\includegraphics[width=0.45\linewidth]{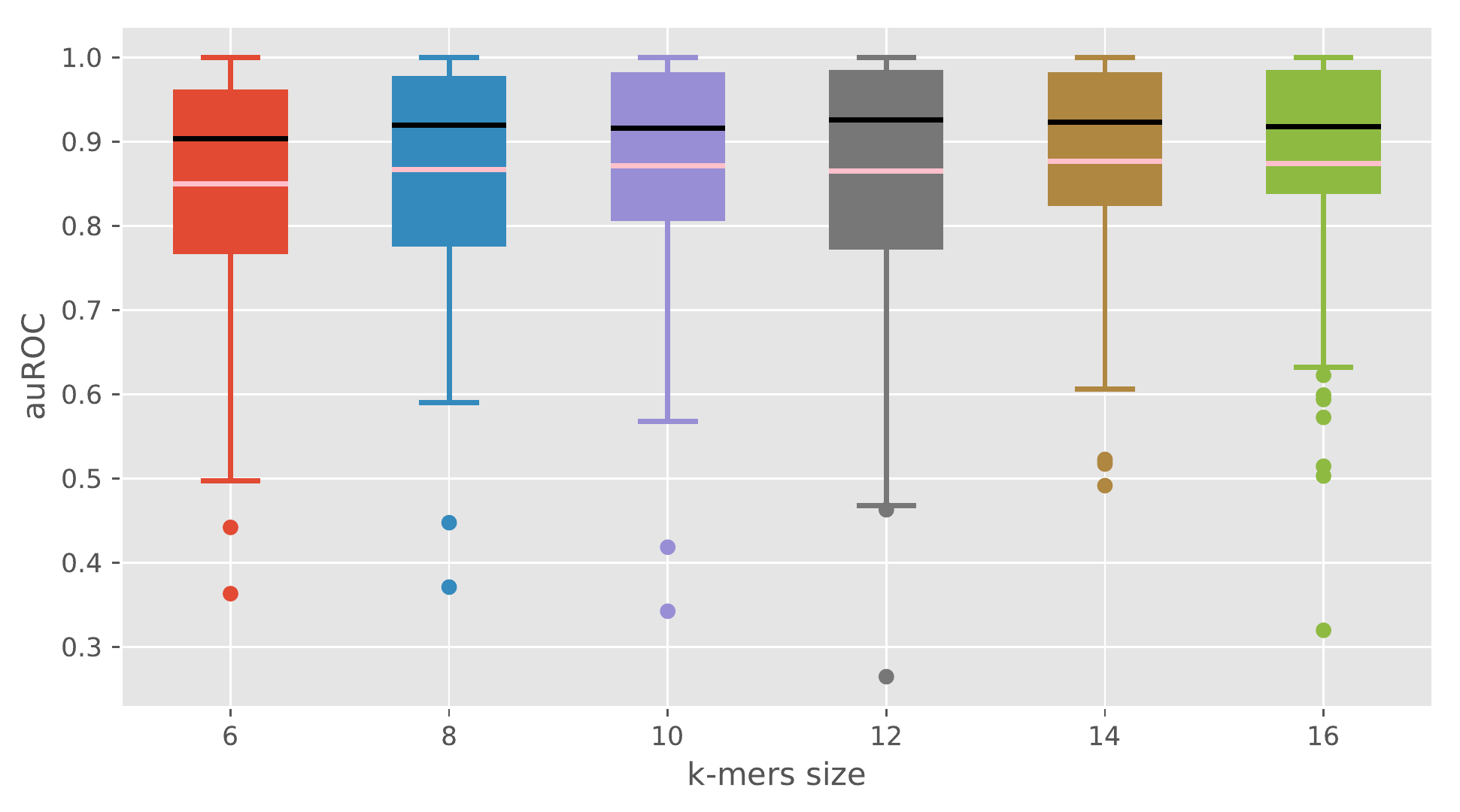}
	\caption{Boxplots when varying filter number $q$ (left) and filter size (right).}\label{fig:boxplot}
\end{figure}

\paragraph{Discussion about complexity.}
Performing backpropgation with our RKN model has the same complexity has a
performing a similar step within a recurrent neural network, up to the
computation of the inverse square root matrix $K_{ZZ}^{-1/2}$, which has
complexity $O(q^3)$. When $q$ is reasonably small $q=128$ in our experiments,
such a complexity is negligible. For instance, one forward pass with a minibatch of $b=128$ sequences of length $m$ yields
a complexity $O(k^2 m b q)$, which can typically be  much greater than $q^3$.

\paragraph{Computing infrastructures.}
Experiments were conduced by using a shared GPU cluster, in large parts build
with Nvidia gamer cards (Titan X, GTX1080TI). About 10 of these GPUs were used
simultaneously to perform the experiments of this paper.

\paragraph{Scatter plots and statistical testing.}
Even though each method was run only one time for each task, the 85 tasks allow
us to conduct statistical testing when comparing two methods.
In Figures~\ref{fig:scat1} and~\ref{fig:scat2}, we provide pairwise comparisons allowing us to 
assess the statistical significance of various conclusions drawn in the paper. We use a
Wilcoxon signed-rank test to provide p-values.
\begin{figure}
\centering
    \includegraphics[width=0.45\linewidth]{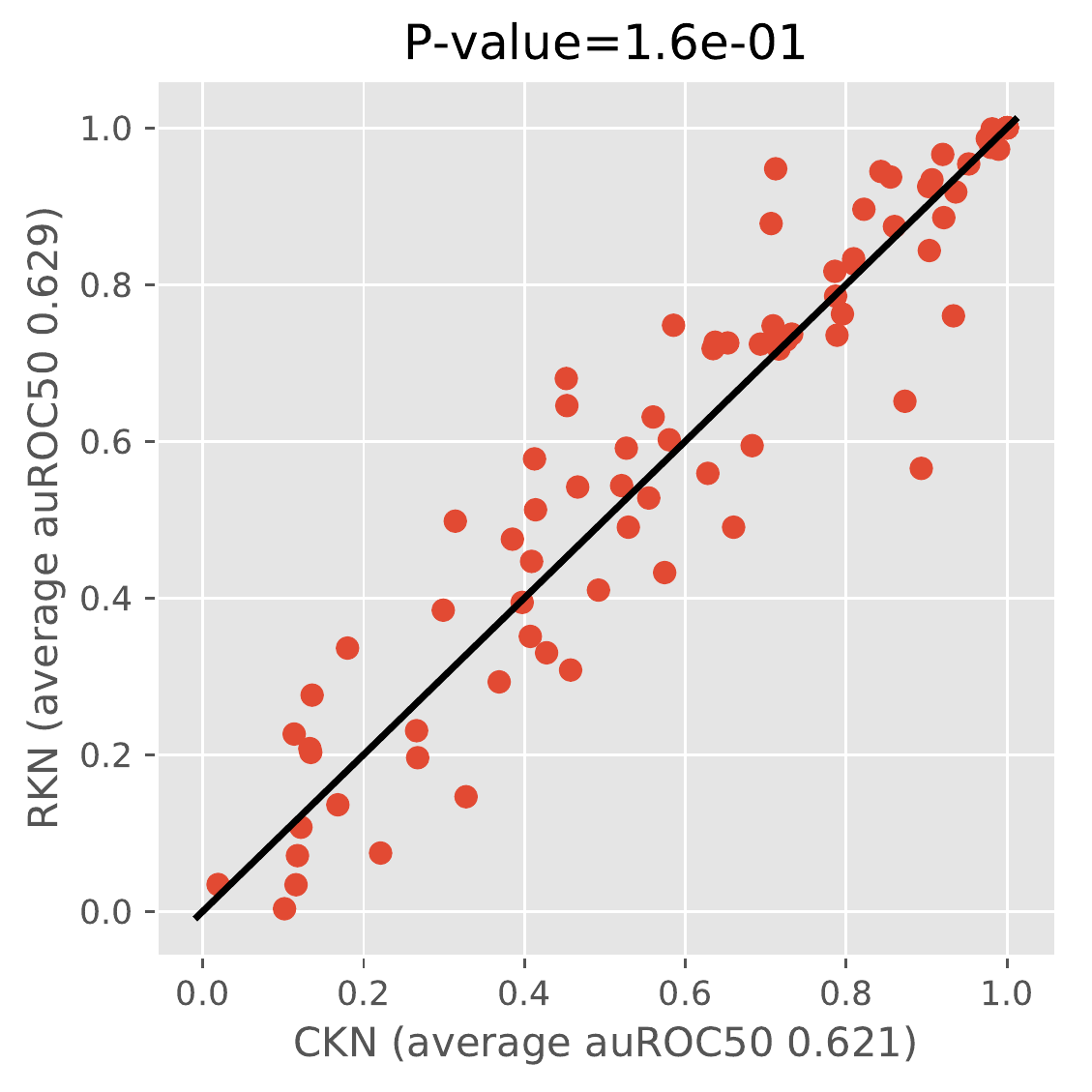}\hfill
    \includegraphics[width=0.45\linewidth]{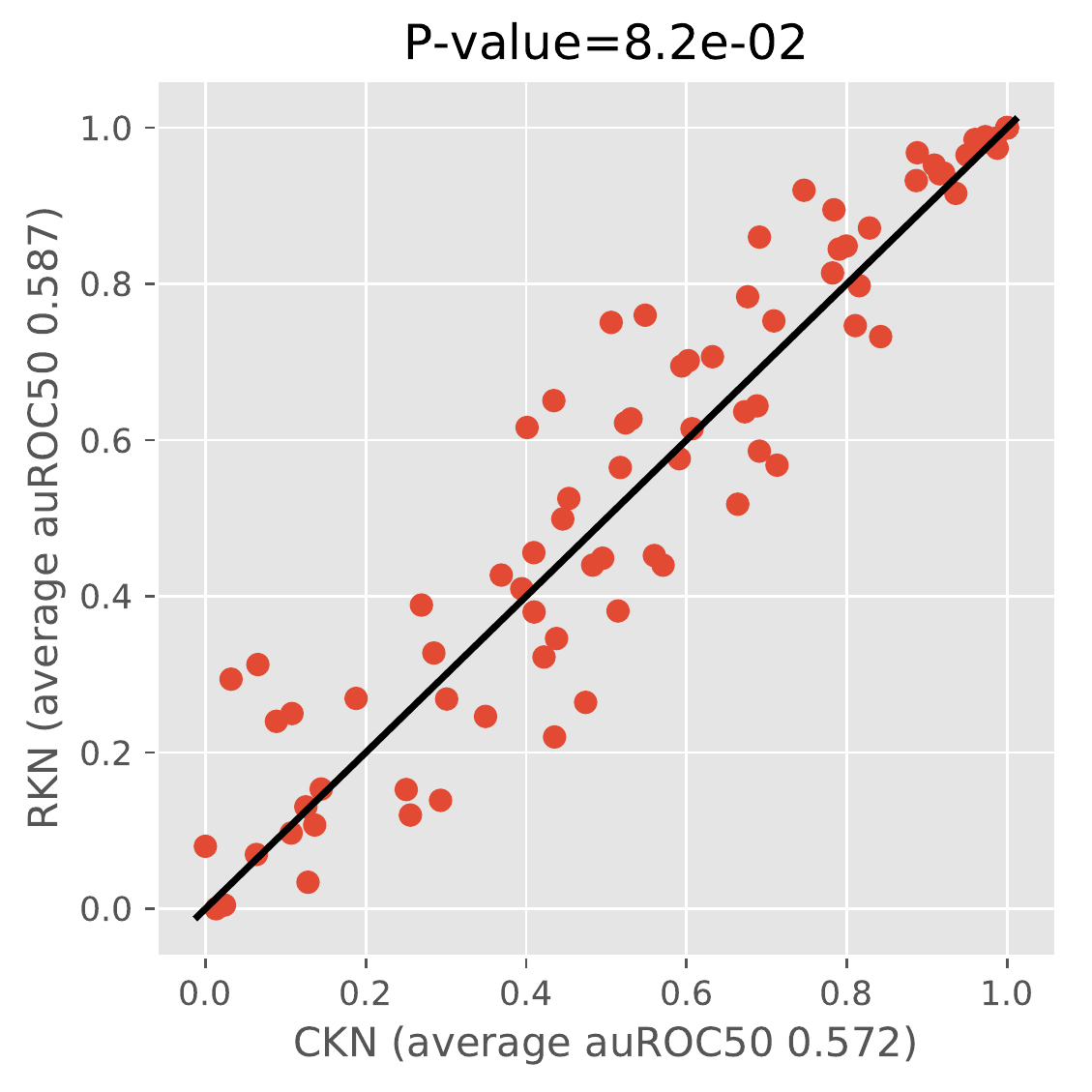}\\
    \includegraphics[width=0.45\linewidth]{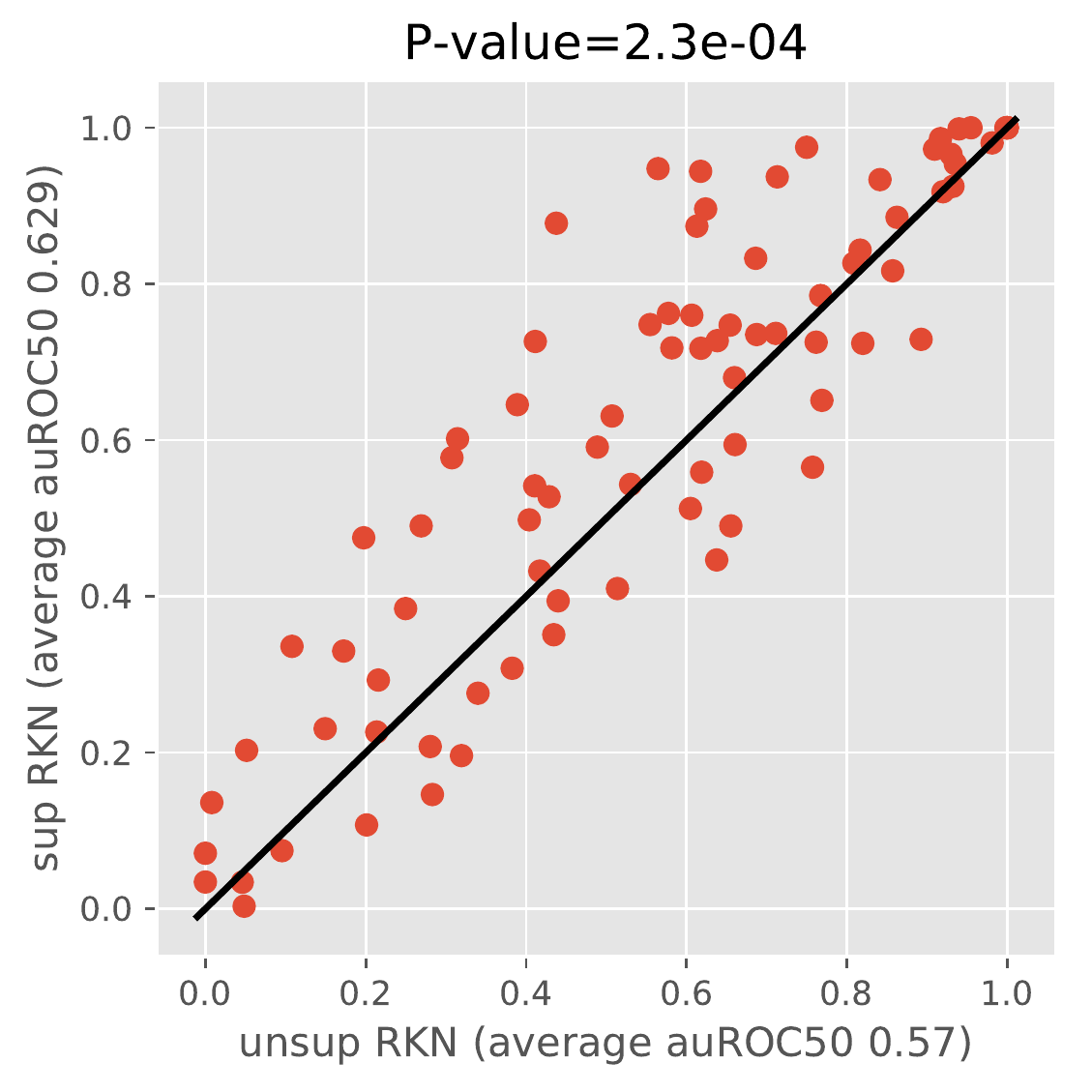}\hfill
    \includegraphics[width=0.45\linewidth]{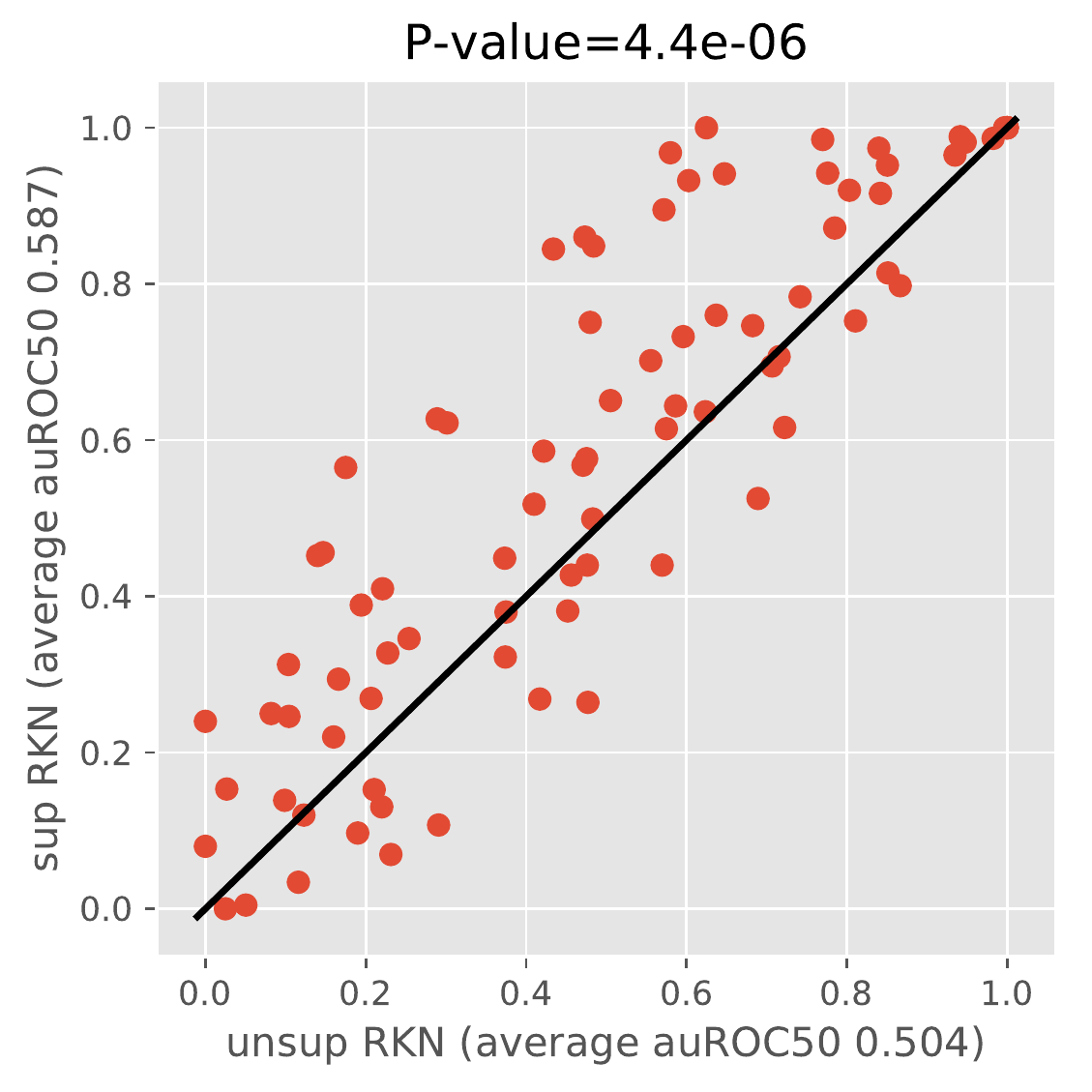}\\
    \includegraphics[width=0.45\linewidth]{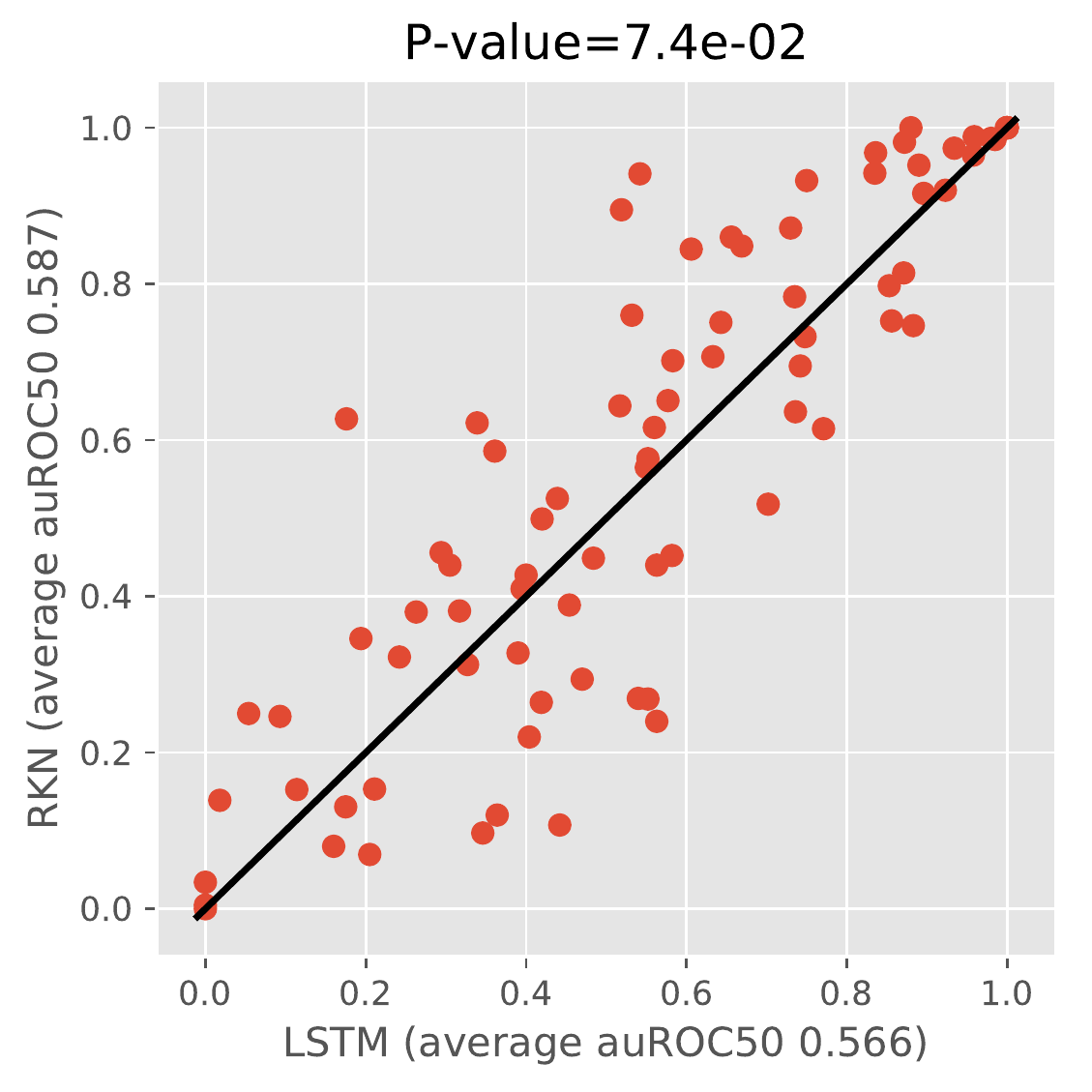}\hfill
\caption{Scatterplots when comparing pairs of methods. In particular, we want to compare RKN vs CKN (top); , RKN vs RKN (unsup) (middle); RKN vs. LSTM (bottom).}\label{fig:scat1}
\end{figure}
\begin{figure}
\centering
    \includegraphics[width=0.45\linewidth]{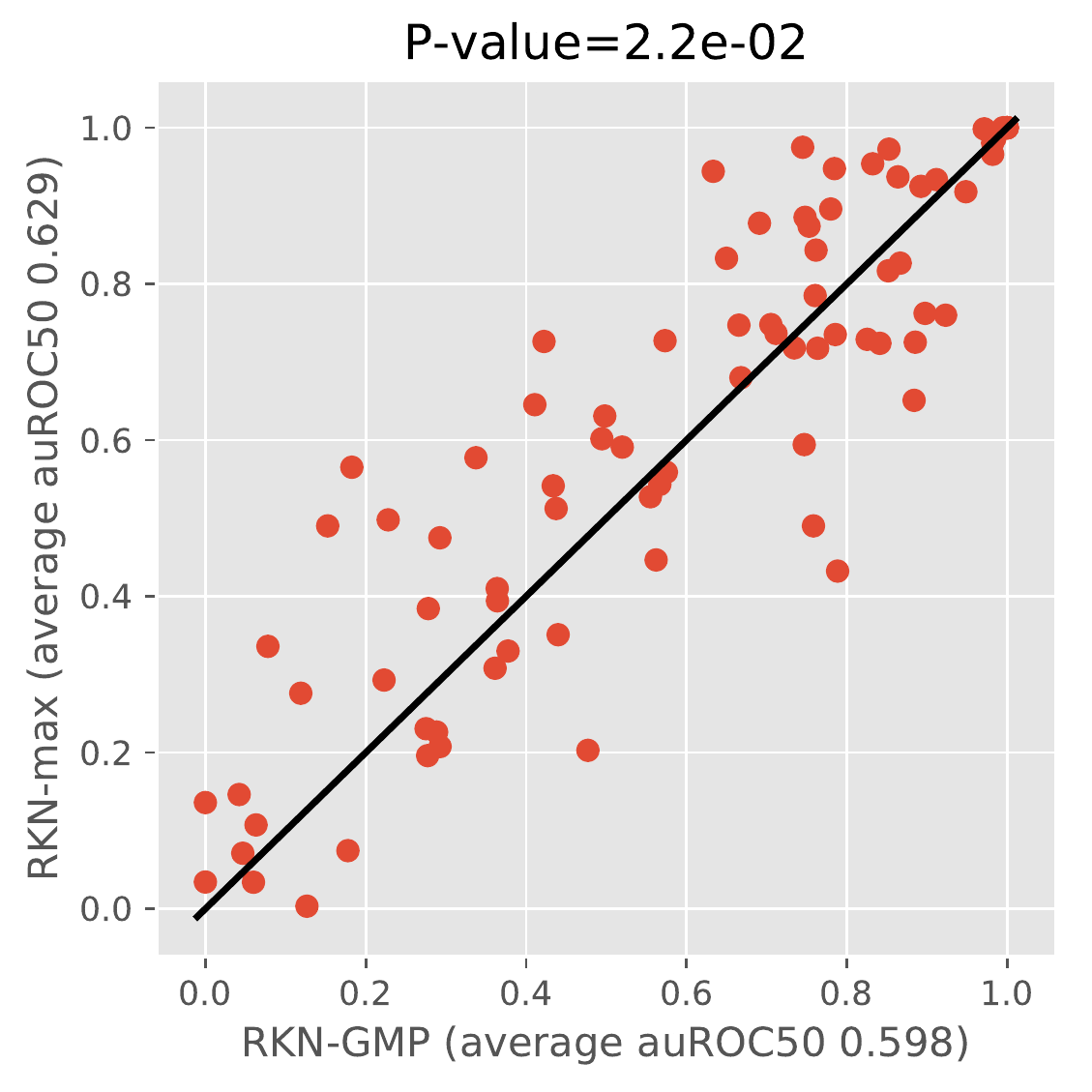}\hfill
    \includegraphics[width=0.45\linewidth]{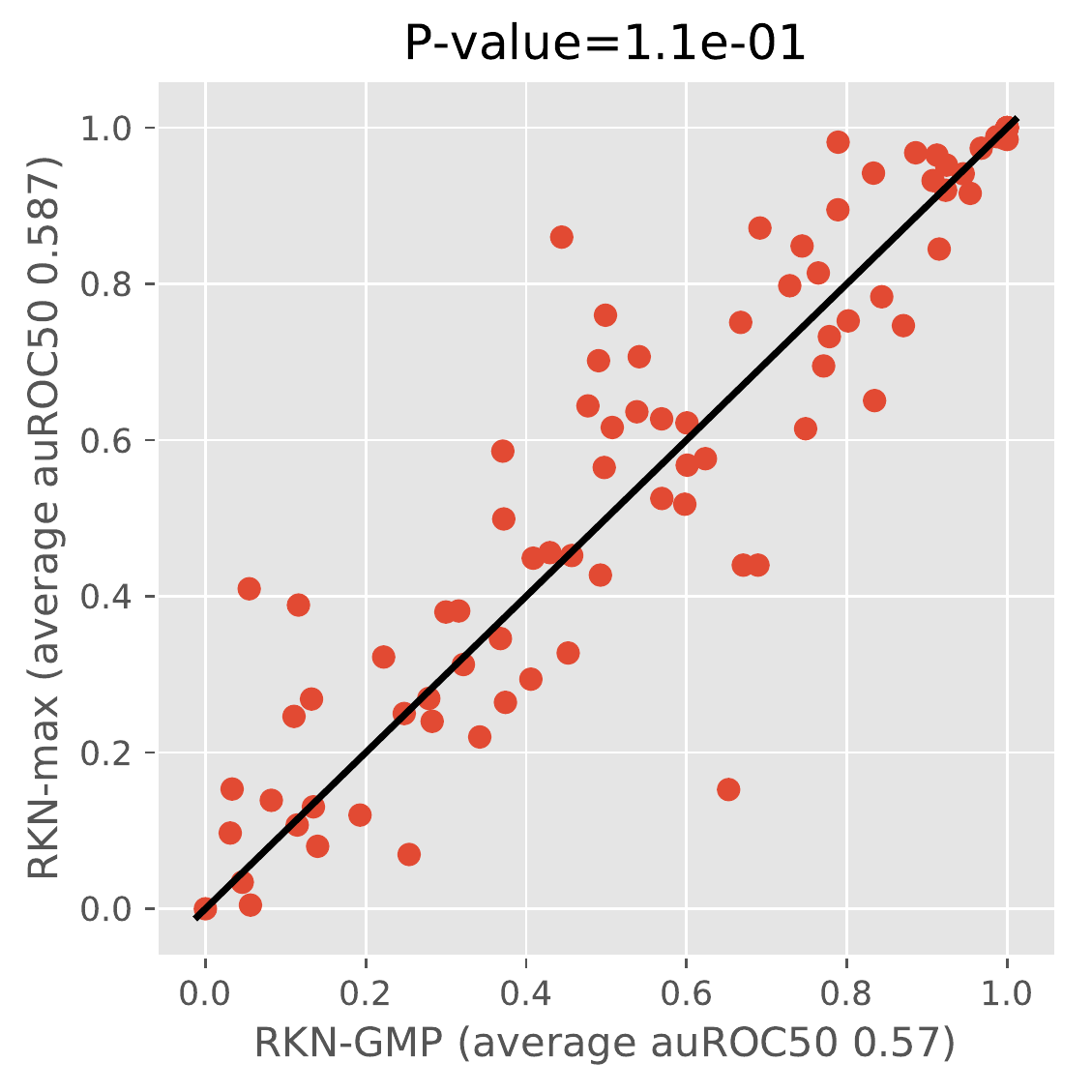}
    \includegraphics[width=0.45\linewidth]{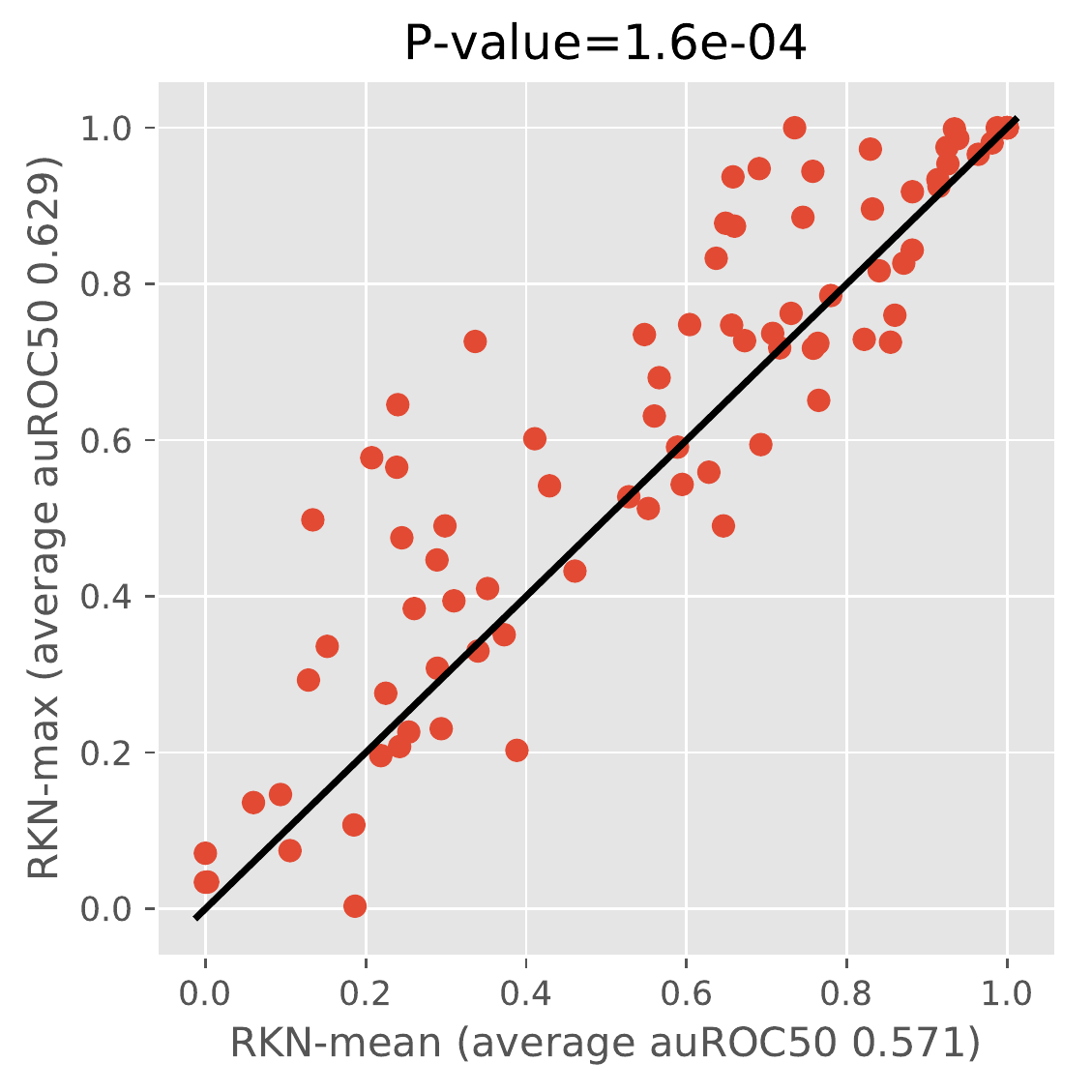}\hfill
    \includegraphics[width=0.45\linewidth]{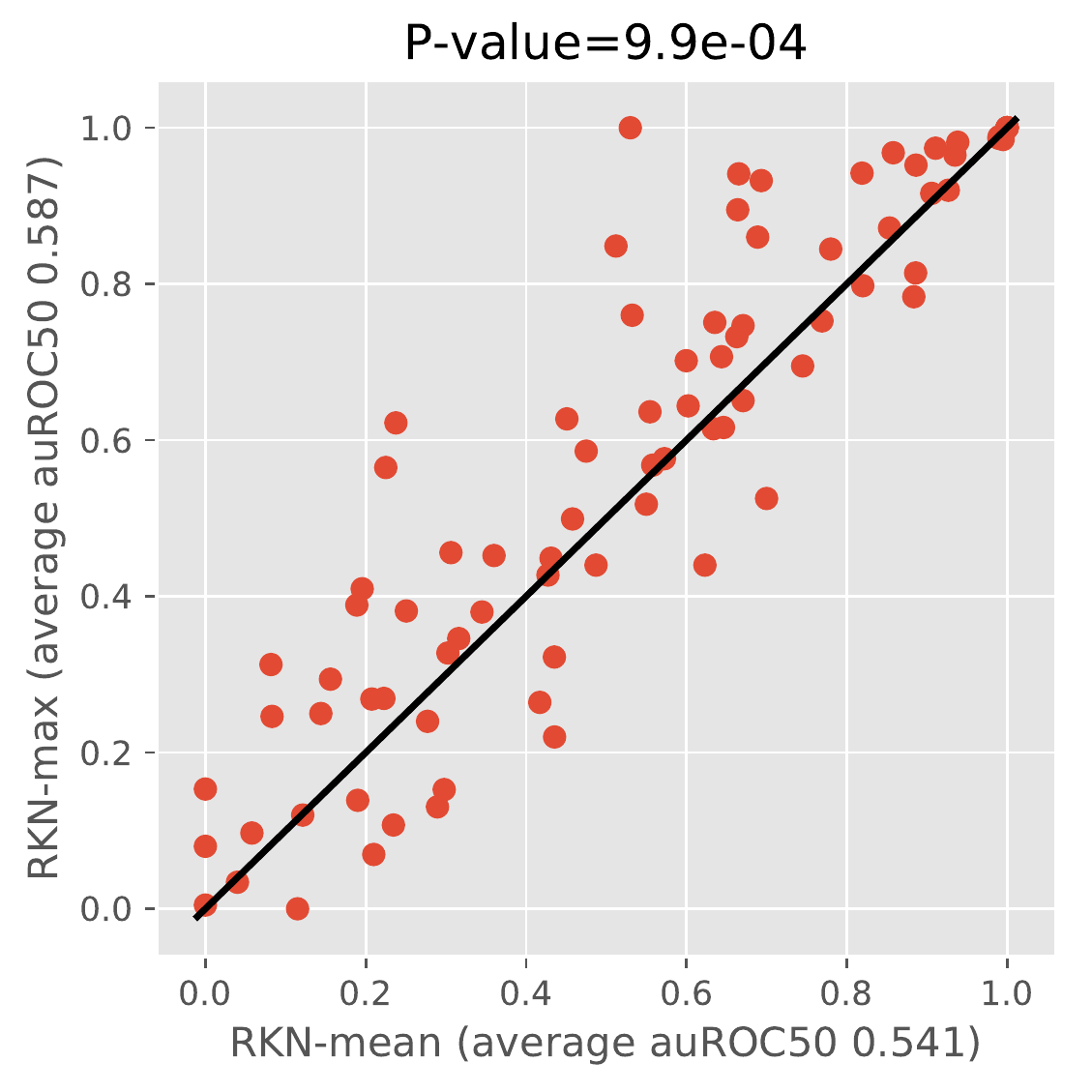}
\caption{Scatterplots when comparing pairs of methods. In particular, we want to compare RKN-gmp vs RKN-max (top); RKN-max vs. RKN-mean (bottom).}\label{fig:scat2}
\end{figure}

\subsection{Protein fold classification on SCOP 2.06}
\paragraph{Hyperparameter search grids.}
We provide the grids used for hyperparameter search, shown in Table~\ref{tab:hyper206}.
\begin{table}[hbtp]
\centering
	\caption{Hyperparameter search range for SCOP 2.06.}\label{tab:hyper206}
	\begin{tabular}{lcc}
		\toprule
		hyperparameter & search range \\ \midrule
		$\sigma$ ($\alpha=1/k \sigma^2$) & [0.3;0.4;0.5;0.6] \\
		$\mu$ & [0.01;0.03;0.1;0.3;1.0;3.0;10.0] \\
		$\lambda$ & integer multipliers of 0.05 in [0;1]\\
		\bottomrule
	\end{tabular}
\end{table}

\paragraph{Complete results with error bars.}
The classification accuracy for CKNs and RKNs on protein fold classification on SCOP 2.06 are obtained by averaging on 10 runs with different seeds. The results are shown in Table~\ref{fig:scop206_errbar} with error bars.
\begin{table}[h]
	\centering
	\caption{Classification accuracy for SCOP 2.06 on all (top) and level-stratified (bottom) test data. For CKNs and RKNs, the results are obtained over 10 different runs.}\label{fig:scop206_errbar}
	\begin{tabular}{lcccc}
		\toprule
		Method & Params & \multicolumn{3}{c}{Accuracy on SCOP 2.06} \\ 
		& & top 1 & top 5 & top 10 \\ \midrule
		PSI-BLAST & - & 84.53 & 86.48 & 87.34 \\
		DeepSF & 920k & 73.00  &	 90.25 &	 94.51 \\
		CKN (128 filters)     & 211k     & 76.30$\pm$0.70 & 92.17$\pm$0.16 & 95.27$\pm$0.17 \\
 CKN (512 filters)     & 843k     & 84.11$\pm$0.16 & 94.29$\pm$0.20 & 96.36$\pm$0.13 \\
		\midrule
 RKN (128 filters)             & 211k     & 77.82$\pm$0.35 & 92.89$\pm$0.19 & 95.51$\pm$0.20  \\
 RKN (512 filters)               & 843k     & 85.29$\pm$0.27 & 94.95$\pm$0.15 & 96.54$\pm$0.12 \\
		\bottomrule
	\end{tabular}
	\resizebox{\textwidth}{!}{
	\begin{tabular}{lcccc}
		\toprule
		Method & \multicolumn{3}{c}{Level-stratified accuracy (top1/top5/top10)} \\ 
		& family & superfamily & fold  \\ \midrule
		PSI-BLAST & 82.20/84.50/85.30 & 86.90/88.40/89.30 & 18.90/35.10/35.10 \\
		DeepSF & 75.87/91.77/95.14 & 72.23/90.08/94.70 & 51.35/67.57/72.97 \\
		CKN (128 filters) & 83.30$\pm$0.78/94.22$\pm$0.25/96.00$\pm$0.26 & 74.03$\pm$0.87/91.83$\pm$0.24/95.34$\pm$0.20 & 43.78$\pm$3.59/67.03$\pm$3.38/77.57$\pm$3.64 \\
 CKN (512 filters)  & 90.24$\pm$0.16/95.77$\pm$0.21/97.21$\pm$0.15 & 82.33$\pm$0.19/94.20$\pm$0.21/96.35$\pm$0.13 & 45.41$\pm$1.62/69.19$\pm$1.79/79.73$\pm$3.68 \\
		\midrule
 RKN (128 filters) & 76.91$\pm$0.87/93.13$\pm$0.17/95.70$\pm$0.37 & 78.56$\pm$0.40/92.98$\pm$0.22/95.53$\pm$0.18 & 60.54$\pm$2.76/83.78$\pm$2.96/90.54$\pm$1.35 \\
 RKN (512 filters) & 84.31$\pm$0.61/94.80$\pm$0.21/96.74$\pm$0.29 & 85.99$\pm$0.30/95.22$\pm$0.16/96.60$\pm$0.12 & 71.35$\pm$1.32/84.86$\pm$2.16/89.73$\pm$1.08 \\
		\bottomrule
	\end{tabular}
	}
\end{table}

\end{document}